\documentclass[11pt]{amsart}

\usepackage{lipsum}
\usepackage{amsfonts}
\usepackage{graphicx}

\usepackage{amssymb,amsmath}
\usepackage{booktabs}
\usepackage{dsfont}
\usepackage[colorlinks, citecolor=red]{hyperref}
\usepackage{color}

\usepackage{mathrsfs}

\usepackage{subfigure}

\usepackage{caption}
\usepackage{bm}
\usepackage{algorithm}
\usepackage{algorithmicx}
\usepackage[noend]{algpseudocode}

\newcommand{\xkh}[1]{\left(#1\right)}
\newcommand{\axkh}[1]{(#1)}

\newcommand{\dkh}[1]{\left\{#1\right\}}
\newcommand{\zkh}[1]{\left[#1\right]}

\newcommand{\normf}[1]{\left\|{#1}\right\|_F}

\newcommand{\norms}[1]{\left\|{#1}\right\|}

\newcommand{\ucnormf}[1]{\|{#1}\|_F}
\newcommand{\ucnorms}[1]{\|{#1}\|}

\newcommand{\abs}[1]{\left\lvert#1\right\rvert}

\newcommand{\argmin}[1]{{\rm arg\;}\mathop{\rm min}\limits_{#1}}

\newcommand{\minm}[1]{\mathop{\rm min}\limits_{#1}}
\newcommand{\maxm}[1]{\mathop{\rm max}\limits_{#1}}

\newcommand{\sumin}{\sum_{i=1}^n}

\newcommand{\dist}[1]{\mathrm {dist}\left(#1\right)}
\newcommand{\df}[1]{{\mathrm d}_F\left(#1\right)}
\newcommand{\ucdf}[1]{{\mathrm d}_F (#1 )}

\newcommand{\odgroup}{\mathrm{O}(d)}

\newcommand{\sedgroup}{\mathrm{SE}(d)}
\newcommand{\odn}{{\mathrm O}(d)^{\otimes n}}

\newcommand{\PP}{{\mathbb P}}

\newcommand{\N}{{\mathbb N}}

\newcommand{\R}{{\mathbb R}}

\newcommand{\lkh}{{(l)}}
\newcommand{\T}{\top}

\newcommand{\ma}{{\bm A}}

\newcommand{\md}{{\bm D}}
\newcommand{\mf}{{\bm F}}
\newcommand{\mg}{{\bm G}}
\newcommand{\mh}{{\bm H}}
\newcommand{\mi}{{\bm I}}

\newcommand{\mr}{{\bm R}}
\newcommand{\ms}{{\bm S}}
\newcommand{\muu}{{\bm U}}
\newcommand{\mv}{{\bm V}}
\newcommand{\mx}{{\bm X}}
\newcommand{\my}{{\bm Y}}
\newcommand{\mz}{{\bm Z}}
\newcommand{\mzero}{{\bm 0}}

\newcommand{\midd}{{\bm I}_d}

\newcommand{\myu}{\tilde{\bm Y}}

\newcommand{\mxu}{\tilde{\bm X}}

\newcommand{\mxt}{{\bm X}^t}
\newcommand{\mxtt}{{\bm X}^{t+1}}
\newcommand{\mxttl}{{\bm X}^{t+1,(l)}}
\newcommand{\mxtu}{\tilde{\bm X}^t}
\newcommand{\mxttu}{\tilde{\bm X}^{t+1}}

\newcommand{\mxit}{{\bm X}_i^t}
\newcommand{\mxitu}{\tilde{\bm X}_i^t}

\newcommand{\mxitt}{{\bm X}_i^{t+1}}
\newcommand{\mxittu}{\tilde{\bm X}_i^{t+1}}

\newcommand{\mxtl}{{\bm X}^{t,(l)}}

\newcommand{\mxitl}{{\bm X}_i^{t,(l)}}

\newcommand{\mxltl}{{\bm X}_l^{t,(l)}}

\newcommand{\mxittl}{{\bm X}_i^{t+1,(l)}}

\newcommand{\mft}{{\bm F}^t}
\newcommand{\mfit}{{\bm F}_i^t}

\newcommand{\mftl}{{\bm F}^{t,(l)}}
\newcommand{\mfitl}{{\bm F}_i^{t,(l)}}

\newcommand{\mgt}{{\bm G}^t}
\newcommand{\mgit}{{\bm G}_i^t}

\newcommand{\mgtl}{{\bm G}^{t,(l)}}
\newcommand{\mgitl}{{\bm G}_i^{t,(l)}}

\newcommand{\mq}{{\bm Q}}
\newcommand{\mqu}{\tilde{\bm Q}}
\newcommand{\mqt}{{\bm Q}^t}
\newcommand{\mqtt}{{\bm Q}^{t+1}}

\newcommand{\mqtl}{{\bm Q}^{t,(l)}}

\newcommand{\mw}{{\bm W}}
\newcommand{\mwl}{{\bm W}_l}
\newcommand{\mwll}{{\bm W}^{(l)}}

\newcommand{\oi}{{\mathcal I}}
\newcommand{\op}{\mathrm{sgn}}
\newcommand{\opn}{\mathrm{sgn}}

\newcommand{\optt}{{\mathcal P}_{T_{\mx^t}}}
\newcommand{\optit}{{\mathcal P}_{T_{\mx_i^t}}}

\newcommand{\opttl}{{\mathcal P}_{T_{\mxtl}}}
\newcommand{\optitl}{{\mathcal P}_{T_{\mxitl}}}

\newcommand{\os}{{\mathcal S}}
\newcommand{\osn}{{\mathcal S}}

\newcommand{\osgn}{\mathrm{sgn}}

\newcommand{\opttu}{{\mathcal P}_{T_{\mxtu}}}
\newcommand{\optitu}{{\mathcal P}_{T_{\mxitu}}}

\newcommand{\ep}{\varepsilon}

\newcommand{\tr}{{\rm tr}}

\renewcommand{\omega}{\eta}

\newcommand{\sqrttwo}{\sqrt{2}}

\newcommand{\sqrtn}{\sqrt{n}}
\newcommand{\sqrtd}{\sqrt{d}}
\newcommand{\sqrtnd}{\sqrt{nd}}

\newcommand{\sqrtln}{\sqrt{\log n}}

\newcommand{\et}{e^t}

\newcommand{\etf}{e^t_F}
\newcommand{\ettf}{e^{t+1}_F}

\newcommand{\ebar}{\bar{e}}
\newcommand{\ebarf}{\bar{e}_F}

\newcommand{\RNum}[1]{\uppercase\expandafter{\romannumeral #1\relax}}

\newtheorem{definition}{Definition}[section]

\newtheorem{theorem}[definition]{Theorem}
\newtheorem{lemma}[definition]{Lemma}

\newtheorem{remark}[definition]{Remark}

\date{}

\begin{document}

\author{Haiyang Peng}
\address{School of Mathematical Sciences, Beihang University, Beijing, 100191, China }
\email{haiyangpeng@buaa.edu.cn}

\author{Deren Han}
\address{School of Mathematical Sciences, Beihang University, Beijing, 100191, China }
\email{handr@buaa.edu.cn}

\author{Xin Chen}
\address{Department of Systems Engineering and Engineering Management, The Chinese University of Hong Kong, HKSAR, 999077, China }
\email{xinchen@cuhk.edu.hk}


\author{Meng Huang}
\address{School of Mathematical Sciences, Beihang University, Beijing, 100191, China }
\email{menghuang@buaa.edu.cn}


\baselineskip 18pt

\bibliographystyle{plain}

\title[NS-RGS: Newton-Schulz Based Riemannian Gradient Method]{NS-RGS: Newton-Schulz Based Riemannian Gradient Method for Orthogonal Group Synchronization}

\begin{abstract}
Group synchronization is a fundamental task involving the recovery of group elements from pairwise measurements. 
For orthogonal group synchronization, the most common approach reformulates the problem as a constrained nonconvex optimization and solves it using projection-based methods, such as the generalized power method. 
However, these methods rely on exact SVD or QR decompositions in each iteration, which are computationally expensive and become a bottleneck for large-scale problems. 
In this paper, we propose a Newton-Schulz-based Riemannian Gradient Scheme (NS-RGS) for orthogonal group synchronization that significantly reduces computational cost by replacing the SVD or QR step with the Newton-Schulz iteration. This approach leverages efficient matrix multiplications and aligns perfectly with modern GPU/TPU architectures. 
By employing a refined leave-one-out analysis, we overcome the challenge arising from statistical dependencies, and establish that NS-RGS with spectral initialization  achieves linear convergence to the target solution up to near-optimal statistical noise levels. 
Experiments on synthetic data and real-world global alignment tasks demonstrate that NS-RGS attains accuracy comparable to state-of-the-art methods such as the generalized power method, while achieving nearly a 2\(\times\) speedup.
\end{abstract}

\maketitle

\keywords{Keywords: group synchronization, orthogonal group, nonconvex optimization, Riemannian optimization, leave-one-out analysis}


\section{Introduction}
\subsection{Group synchronization}
The recovery of group elements from pairwise measurements constitutes one of the most pervasive and fundamental challenges in high-dimensional data analysis, scientific computing, and machine perception \cite{chatterjee2017robust, chensijin2024non, lingshuyang2022improved, shkolnisky2012viewing}. 
In this paper, we study the  \textit{orthogonal group synchronization} problem, which seeks to recover a collection of  orthogonal group elements \(\dkh{\mz_{i}}_{i=1}^{n}\) from their corrupted pairwise measurements, 
\begin{align*}
	\ma_{ij} = \mz_{i} \mz_{j}^\T + \sigma \mw_{ij} \in \R^{d \times d}, \qquad 1 \le i \neq j \le n,
\end{align*}
where \(\mw_{ij} = \mw_{ji}^\T \in \R^{d \times d}\) are Gaussian random matrices with independent standard normal entries, $\sigma$ denotes the noise level, and
\[
\mz_{i} \in \odgroup = \{\mg \in \R^{d \times d} : \mg \mg^\T = \mg^\T \mg = \midd\}.
\]
This problem is fundamental across science and engineering, underpinning tasks like computer vision \cite{chatterjee2017robust, huang2013consistent, ozyecsil2017survey}, robotics \cite{chenxin2025non, rosen2019se}, and cryo-EM imaging \cite{shkolnisky2012viewing, singer2018mathematics, singer2011viewing}. 

Based on the least squares criterion, one can employ the following program to estimate the groud truth \(\dkh{\mz_{i}}_{i=1}^{n}\): 
\begin{equation}\label{def:f}
	\minm{\mx_i \in \odgroup} F(\mx) = \sum_{i \ne j}^{n} \frac 12 \ucnormf{\mx_i \mx_j^\T - \ma_{ij}}^2 . 
\end{equation} 
While generalized power method (GPM) \cite{chensijin2024non, lingshuyang2022improved, liuhuikang2023unified} and Riemannian gradient-based algorithms \cite{boumal2023introduction, rosen2019se} are at the forefront of group synchronization research, the retraction operation remains a significant computational bottleneck. On the orthogonal group $\odgroup$ or Stiefel manifold, this step requires per-node SVD or polar decomposition at every iteration. Such operations scale poorly on hardware accelerators like GPUs or TPUs. Although these platforms are highly efficient at dense matrix multiplication, they are frequently bottlenecked by the sequential nature of complex factorizations. 

To circumvent the cost of matrix factorizations, this study develops the Newton-Schulz based Riemannian gradient method for Orthogonal Group Synchronization (NS-RGS). The core innovation lies in employing Newton-Schulz iterations \cite{kenney1991rational, nicholas2008functions} to approximate the orthogonal projection, effectively replacing SVD retractions with highly parallelizable matrix multiplications. This algorithmic design reflects a deliberate trade-off between local precision and global computational efficiency. Furthermore, we establish that exact manifold feasibility is not strictly necessary for the convergence of the Riemannian gradient flow, provided the iterative approximation error is sufficiently controlled.

\subsection{Related work}
Early rigorous group synchronization methods addressed nonconvex constraints via spectral methods \cite{bandeira2013cheeger, cucuringu2012sensor, singer2011angular}. 
Spectral methods are now predominantly viewed as initialization tools for finding the basin of attraction of iterative solvers, as they perform only a single denoising step without enforcing group constraints. 
To achieve statistical optimality while maintaining a convex landscape, researchers have extensively leveraged semidefinite programming (SDP) relaxations \cite{arie2012global, bandeira2018random, bandeira2017tightness, singer2011angular}. 
However, the computational complexity of solving an $n \times n$ SDP using standard interior-point solvers scales as $O(n^{3.5})$ \cite{chensijin2024non,liuhuikang2023unified}, rendering it prohibitive for large-scale applications. Consequently, subsequent research has shifted focus toward more efficient alternatives, such as GPM \cite{chensijin2024non, liu2017estimation, liuhuikang2023unified, zhu2021orthogonal} and Riemannian gradient-based algorithms \cite{boumal2023introduction, liuhuikang2023resync, rosen2019se, zhulinglingzhi2023rotation}. 

Subsequent scholars examined nonconvex landscape by the Burer-Monteiro factorization, by relaxing an orthogonal matrix to an element on Stiefel manifold 
\[\mbox{St}(p,d) := \dkh{\ms_i \in \R^{d \times p} : \ms_i \ms_i^\T = \midd}. \]
Boumal \cite{boumal2016nonconvex} demonstrated that, within the framework of the stochastic group block model (SGBM), i.e., \(d = 1\), the second-order necessary conditions for optimality are sufficient for global optimality with \(\sigma \lesssim O(n^{1/6})\). 
For the case \(d \ge 2\), \cite{lingshuyang2023soving} provided a general sufficient condition for \(p \ge 2 d+1\) that guarantees the absence of spurious local minimizers in the optimization landscape with \(\sigma \lesssim O(n^{1/4})\). 
Recently, Ling enhanced this result, generalizing the benign landscape analysis to higher noise level \(\sigma \lesssim O(\sqrtn/d)\) (modulo logarithmic terms) \cite{lingshuyang2025local}.


Although landscape analysis suggesting local minima are not critical bottlenecks, proving algorithmic convergence and its rate is challenging due to the inherent dependency between iterates and the random noise. 
To address this, Zhong and Boumal \cite{zhongyiqiao2018near} introduced the leave-one-out technique to phase synchronization, providing a near-optimal guarantee on the tightness of the solution under the regime $\sigma \lesssim \sqrt{n/\log n}$. 
Subsequently, Ling \cite{lingshuyang2022improved} extended this framework to $\odgroup$ synchronization, establishing near-optimal noise thresholds of \(\sigma \lesssim O(\sqrtn/d)\) for the GPM. 
By constructing a sequence of auxiliary iterates, the leave-one-out analysis serves as a powerful mechanism for decoupling the statistical dependencies between signals and noise. 
It is widely employed in problems including phase retrieval, matrix completion, and blind deconvolution to ensure theoretical convergence and statistical independence \cite{abbe2020entrywise, chenyuxin2019, chenyuxin2020, huangmeng2022, macong2020, penghaiyang2024npr}. 
Furthermore, Gao and Zhang \cite{gaochao2023optimal} derived the exact minimax risk, demonstrating that the GPM is not only computationally superior to SDP relaxation methods but also statistically equivalent.

\subsection{Our contributions}\label{sec:contributions}
Due to the significant computational cost of the retraction step with SVDs, this paper employs the Newton-Schulz iterations 
\[\ms_{t+1} = \frac 12 \ms_{t} \Big( 3\midd - \ms_{t}^\T \ms_t \Big), \quad t = 0, 1, \cdots, \]
to approximate the matrix sign function for retraction. 
Newton-Schulz iterations obviate the computational bottlenecks inherent in SVD-based approaches by employing only matrix multiplications, thereby attaining peak floating-point operations (FLOPs) utilization on hardware accelerators like tensor cores. 


For the group synchronization problem investigated in this paper, we develop an improved Riemannian gradient descent framework based on the Newton-Schulz iteration. Our method eliminates the need for costly matrix factorizations and supports efficient parallel execution, leading to lower time complexity per iteration. 
The contributions of this work encompass algorithmic innovation, rigorous theoretical analysis, and extensive empirical validation, effectively bridging the divide between high-dimensional statistics and high-performance computing. Our primary contributions are three-fold: 
\begin{itemize}
	\item An efficient algorithmic framework: We propose NS-RGS, a novel framework for group synchronization designed to circumvent the computational bottleneck in conventional methods. Rather than performing exact but computationally intensive SVDs for retractions at each iteration, we leverage the Newton-Schulz approximation. 
	By substituting computationally demanding decomposition tasks with straightforward matrix multiplications, this alternative approach achieves significantly enhanced efficiency with a negligible loss in accuracy. 

	\item Rigorous theoretical guarantees via Leave-one-out analysis: We establish a solid theoretical foundation for our inexact Riemannian optimization framework. 
	Traditional convergence analysis is not applicable in this context, as the iterates become statistically coupled with the noise. 
	To decouple these complicated dependencies, we leverage a sophisticated leave-one-out analysis, constructing auxiliary iterates that maintain independence from specific noise entries. 
	This approach enables us to rigorously demonstrate that our algorithm achieves linear convergence and reaches near-optimal statistical noise limits. 
	Whereas Liu et al. \cite{liuhuikang2023unified} focused on the global convergence of the iterate \(\mxt\) toward the ground truth \(\mz\), our work provides a more refined analysis that guarantees the proximity of individual components. 
	Importantly, we show that each component \(\mxit\) tightly nears an optimal rotation \(\mqt\). 

	\item Experimental validation: To validate the efficiency of NS-RGS, we conducted extensive evaluations on synthetic data and real-world 3D global alignment problem (e.g., the Stanford Lucy dataset). The results indicate that employing a single-step Newton-Schulz iteration yields accuracy on par with leading methods like GPM and Riemannian trust-region method (RTR) \cite{boumal2015riemannian}. More importantly, NS-RGS offers a substantial reduction in computational costs, achieving up to a \(2.3 \times\) acceleration in convergence compared to the standard GPM for real-world task. 
\end{itemize}

\subsection{Notations}
For a given matrix \(\mx\), \(\norms{\mx}\), \(\normf{\mx}\), and \(\norms{\mx}_*\) represents the operator norm, the Frobenius norm, and the nuclear norm. Then \(\mx^\T\) stands for the transpose of $\mx$. We denote the smallest and the \(i\)-th largest singular value (and eigenvalue) of \(\mx\) by \(\sigma_{\min} (\mx)\) and \(\sigma_{i} (\mx)\) (and \(\lambda_{\min} (\mx)\) and \(\lambda_{i} (\mx)\)). 
\(\odgroup = \{\mg \in \R^{d \times d} : \mg \mg^\T = \mg^\T \mg = \midd\}\) denotes the orthogonal group. 
The notation \(\mr \in \odn\) means \(\mr\) is a matrix of size \(n d \times d\) whose \(i\)-th block is \(\mr_i \in \odgroup\). 
The notation \([n]\) refers to the set \(\dkh{1, \cdots, n}\). 
Additionally, the notation \(f(m,n)=O(g(m,n))\) or \(f(m,n)\lesssim g(m,n)\) denotes the existence of a constant \(c>0\) such that \( f(m,n)\le cg(m,n) \), while \(f(m,n)\gtrsim g(m,n)\) indicates that there exist a constant \(c>0\) such that \(f(m,n)\ge cg(m,n)\). 

Noting that \(\mx \mx^\T = \mx \mq (\mx \mq)^\T\) for any matrix \(\mx \in \R^{n d \times d}\) and \(\mq \in \odgroup\), we define the distance up to a global rotation 
\[\df{\mx, \my} = \minm{\mq \in \odgroup} \ucnormf{\mx - \my \mq}, \] 
for any \(\mx, \my \in \R^{n d \times d}\).

\section{Algorithm and interpretation}\label{sec:preliminaries}
\subsection{The Newton-Schulz iteration} 
A widely used retraction operator for orthogonal group synchronization  is the matrix sign function. 
Given the full SVD of \(\ma \in \R^{d \times d}\) as $\ma = \muu {\bm \Sigma} \mv^\T$, 
the matrix sign function is denoted as $\op(\ma) = \muu  \mv^\T$. For a block matrix \({\bm \Phi} = (\ma_1; \cdots; \ma_n) \in \R^{nd \times d}\), we also define \(\opn({\bm \Phi}) = \xkh{\op(\ma_1); \cdots; \op(\ma_n)} \in \odn\).



In this paper, we employ the Newton-Schulz iterations to approximate \(\op(\ma)\) for a full-rank matrix \(\ma\). The iterations are given by 
\begin{align}\label{eq:newton-schulz}
	\ms_{t+1} = \frac 12 \ms_{t} \Big( 3\midd - \ms_{t}^\T \ms_t \Big), \quad \ms_0 = \ma,
\end{align}
as outlined in Algorithm \ref{alg:1}. 
Nakatsukasa and Higham \cite{nakatsukasa2012backward} established its backward stability under specific scaling conditions. 
In recent years, the Newton-Schulz iteration has undergone a significant renaissance within the deep learning community. 
Recent work integrates this method into large-scale training frameworks, notably the Muon optimizer \cite{bernstein2024modular, jordan2024muon}. 
Motivated by the significant empirical success of Muon's spectral updates, a large number of theoretical studies have emerged over the past year, aiming to clarify the mechanisms behind its effectiveness from different perspectives \cite{davis2025spectral, kovalev2025understanding,li2025note, chen2025muon, shen2025convergence}.

Lemma \ref{le:ns:bound} establishes the quadratic convergence of \(\ucnorms{\midd - \ms_t^\T \ms_t}\) and indicates that the error \(\ucnorms{\ms_t - \op (\ma)}\) is controlled by \(\ucnorms{\midd - \ms_t^\T \ms_t}\). 
So it follows that the Newton-Schulz iterates achieve quadratic convergence to \(\op (\ma)\). 
From a computational perspective, this implies that a very small number of iterations suffice for the retraction step, thereby significantly enhancing computational efficiency.

\begin{lemma}\cite{kenney1991rational, nicholas2008functions}\label{le:ns:bound}
	Suppose \(\ucnorms{\midd - \ma^\T \ma} < 1\) for \(\ma \in \R^{d \times d}\). Let \(\dkh{\ms_t}\) are generated by the iteration \eqref{eq:newton-schulz}, 
	then it holds \(\ms_t \rightarrow \op (\ma)\) as \(t \rightarrow \infty\). Specifically, one has 
	\[\ucnorms{\ms_t - \op (\ma)} \le \ucnorms{\midd - \ms_t^\T \ms_t} \le \ucnorms{\midd - \ma^\T \ma}^{2^t}, \quad \forall\ t \in \N. \]
\end{lemma}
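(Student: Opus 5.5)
The plan is to reduce the matrix iteration to a scalar iteration on the singular values. Take the full SVD $\ma = \muu {\bm \Sigma} \mv^\T$ with ${\bm \Sigma} = \mathrm{diag}(s_1,\dots,s_d)$. The hypothesis $\ucnorms{\midd - \ma^\T \ma} < 1$ means $|1-s_i^2|<1$ for every $i$, so each $s_i\in(0,\sqrt2)$; in particular $\ma$ is invertible and $\op(\ma)=\muu\mv^\T$ is well defined.

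By induction on $t$, $\ms_t = \muu {\bm \Sigma}_t \mv^\T$ with ${\bm \Sigma}_t$ diagonal and nonnegative. The inductive step is the identity
\[
\frac12 \muu{\bm\Sigma}_t\mv^\T\big(3\midd-\mv{\bm\Sigma}_t^2\mv^\T\big)=\muu\,\tfrac12{\bm\Sigma}_t(3\midd-{\bm\Sigma}_t^2)\,\mv^\T .
\]
So each singular value evolves by the scalar map $s\mapsto g(s)=\tfrac12 s(3-s^2)$. As a consequence, $\ms_t-\op(\ma)=\muu({\bm\Sigma}_t-\midd)\mv^\T$ and $\midd-\ms_t^\T\ms_t=\mv(\midd-{\bm\Sigma}_t^2)\mv^\T$, and both norms are maxima over $i$ of scalar quantities.

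The key scalar computation is
\[
1-g(s)^2 = 1-\tfrac14 s^2(3-s^2)^2 = \tfrac14(1-s^2)^2(4-s^2),
\]
which follows by expanding in $x=s^2$: $4-x(3-x)^2=(1-x)^2(4-x)$. Writing $e=1-s^2\in(-1,1)$, this becomes $e' = \tfrac14 e^2(3+e)$. We need three facts about this map.
\begin{itemize}
\item Since $3+e\in(2,4)$, we get $0\le e'<e^2$, so $|e'|\le e^2<1$.
\item The hypothesis is therefore preserved at every step.
\item For $s\in(0,\sqrt2)$ we have $g(s)>0$ (because $3-s^2>1$), so positivity of the singular values is preserved and the SVD form above stays valid.
\end{itemize}
Iterating gives $|1-s_{i,t}^2|\le |1-s_i^2|^{2^t}$. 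Taking the maximum over $i$ yields the second inequality $\ucnorms{\midd-\ms_t^\T\ms_t}\le\ucnorms{\midd-\ma^\T\ma}^{2^t}$.

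For the first inequality, for each $s>0$ we have $|s-1| = |1-s^2|/(1+s)\le|1-s^2|$, hence $\max_i|s_{i,t}-1|\le\max_i|1-s_{i,t}^2|$. This is exactly $\ucnorms{\ms_t-\op(\ma)}\le\ucnorms{\midd-\ms_t^\T\ms_t}$. Because the base is less than $1$, the right-hand side tends to $0$, and so $\ms_t\to\op(\ma)$.

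The only delicate point is keeping the singular values positive, so that the limit is $\muu\mv^\T$ rather than a matrix with flipped signs. This is handled by the observation $g(s)>0$ on $(0,\sqrt2)$, combined with the invariance of that interval, which follows from $0\le e'<1$.
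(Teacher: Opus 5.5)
Your proof is correct. The only slip is cosmetic: at $e=0$ one has $e'=0=e^2$, so the first bullet should read $0\le e'\le e^2$, which is all you use afterwards.

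Your route differs from the one the paper indicates, although the two are mathematically close. The paper gives no argument of its own. It cites the Kenney--Laub/Higham analysis of the Newton--Schulz iteration for the sign function of a symmetric matrix, and asserts that the polar case follows through the Jordan--Wielandt matrix $\bigl(\begin{smallmatrix}\mzero & \ma\\ \ma^\T & \mzero\end{smallmatrix}\bigr)$. This lift has the following properties:
\begin{itemize}
\item it is symmetric with eigenvalues $\pm s_i$, and its square is $\mathrm{diag}(\ma\ma^\T,\ma^\T\ma)$;
\item the sign-function Newton--Schulz iteration started from it produces exactly $\bigl(\begin{smallmatrix}\mzero & \ms_t\\ \ms_t^\T & \mzero\end{smallmatrix}\bigr)$;
\item its sign is $\bigl(\begin{smallmatrix}\mzero & \op(\ma)\\ \op(\ma)^\T & \mzero\end{smallmatrix}\bigr)$.
\end{itemize}
Hence the hypothesis and both norms in the lemma carry over with unchanged values, and the symmetric result can be quoted verbatim. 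The symmetric proof itself diagonalizes and runs the odd scalar map $g$ on the eigenvalues $\pm s_i$, so the two arguments end at the same recursion.

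What the lift buys is the ability to import the existing sign-function theory and its constants without recomputation. What your direct SVD argument buys is a short, self-contained proof in the original $d\times d$ setting, with the identity $1-g(s)^2=\tfrac14(1-s^2)^2(4-s^2)$ stated explicitly. Your positivity point is also made explicit: $g$ keeps the singular values in $(0,\sqrt2)$, which you need for $|s-1|\le|1-s^2|$. In the lifted picture this corresponds to eigenvalues never changing sign, which the cited theory handles only implicitly.
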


While the analyses in \cite{kenney1991rational, nicholas2008functions} primarily focus on symmetric matrix targets, Lemma \ref{le:ns:bound} can be rigorously proven via the Jordan-Wielandt matrices \cite{stewart1990matrix}. 

\begin{algorithm}
	\caption{The Newton-Schulz method for the matrix sign function}\label{alg:1}
	\begin{algorithmic}[0]
		\State \textbf{Input:} A matrix $\ma \in \R^{d \times d}$, the maximum number of iterations $T_s$.
		\State \textbf{Initialization:} Set $\ms_0 = \ma$. 
		\State \textbf{Iterative updates: for} $t = 0,1,2, \cdots,T_s-1$ \textbf{do} 
        	\begin{align*}
				\ms_{t+1} = \frac 12 \ms_{t} \Big( 3\midd - \ms_{t}^\T \ms_t \Big). 
			\end{align*}
	\end{algorithmic}
\end{algorithm}

\subsection{Riemannian gradient synchronization with inexact iteration}
The program we consider is 
\begin{align}\label{def:f problem}
	\minm{\mx_i \in \odgroup} F(\mx) = \sum_{1 \le i \ne j \le n} \frac 12 \ucnormf{\mx_i \mx_j^\T - \ma_{ij}}^2. 
\end{align}
To solve this nonlinear system problem, we employ a block coordinate descent (BCD) strategy to update $\mx_i$ as follows, 
\begin{align*}
	\mxitt = {\rm arg}\mathop{\rm min}_{\mx_i \in \odgroup} \sum_{1 \le i \ne j \le n} \frac 12 \ucnormf{\mx_i \mx_j^\T - \ma_{ij}}^2 = \argmin{\mx_i} f_i(\mx_1^t, \cdots, \mx_i, \cdots, \mx_n^t), 
\end{align*}
where \(f_i(\mx_1, \cdots, \mx_n) = \sum_{j \ne i}^{n} \frac 12 \ucnormf{\mx_i \mx_j^\T - \ma_{ij}}^2\). 
Note that 
\[\frac{\partial}{\partial \mx_i} f_i(\mxt) = \sum_{j \ne i}^{n} \xkh{\mx_i^t (\mx_j^t)^\T \mx_j^t - \ma_{ij} \mx_j^t} = \sum_{j \ne i}^{n} \xkh{\mx_i^t - \ma_{ij} \mx_j^t}. 
\] 
By leveraging the Riemannian gradient method framework, an iterative scheme is given by 
\begin{equation*}
	\left\{\begin{aligned}
		\mgit &= \sum\nolimits_{j \ne i}^{n} (\mxit - \ma_{ij} \mx_{j}^{t}), \\
		\mfit &= \mxit - \mu \optit (\mgit), \\
		\mxitt &= \osgn (\mfit), 
	\end{aligned}
	\right.
\end{equation*}
for each component \(\mxitt\). 
Here, the tangent space to \(\odgroup\) at \(\mxit \in \odgroup\) is given by \({\rm T}_{\mxit} := \dkh{\mxit \mh:\mh \in \R^{d \times d}, \mh + \mh^\T = 0}\), while the projection of \(\mg \in \R^{d \times d}\) onto this tangent space is \(\optit(\mg) = (\mg - \mxit \mg^\T \mxit)/2\). 

To improve computational efficiency, we employ an inexact retraction
\begin{align*}
	\mxitt = \os \xkh{\mxit - \mu \optit (\mgit)} 
\end{align*}
for \(i \in [n]\) where \(\os(\cdot)\) represent calculating the matrix sign function by the Newton-Schulz method in Algorithm \ref{alg:1}. 
Despite the iteration $\mxitt$ in Algorithm \ref{alg:2}  lies outside the $\odgroup$ manifold, the operator \(\optit (\cdot)\) is still employed as a pseudo projection procedure. 

Due to the non-convexity of problem \eqref{def:f problem}, proper initialization is crucial to ensure convergence toward the global optimum. 
A widely adopted approach is the aforementioned spectral method \cite{bandeira2013cheeger, lingshuyang2022improved, liuhuikang2023resync, singer2011angular}. Specifically, the initial estimate $\mx^0 = \osgn(\my)$, while \(\my\) is derived from the top $d$ eigenvectors of an observation matrix $\ma$. 
This approximation $\mx^0$ ensures the algorithm starts within a favorable basin of attraction. 
Notably, applying a block matrix representation, the measurement in this paper is given by 
\begin{align}\label{def:measurements}
	\ma = \mz \mz^\T + \sigma \mw \in \R^{n d \times n d}, 
\end{align}
where \(\mz = \xkh{\mz_1; \cdots; \mz_n}\) is the ground truth and \(\mw = [\mw_{ij}]_{1 \le i, j \le n}\) is a symmetric random matrix. 
Under the assumption that self-loop information is excluded from the computation, we let $\mw_{ii} = \mzero$ and define $\mw_{ij} \in \R^{d \times d}$ as a Gaussian random matrix with independent standard Gaussian entries for \(1 \le i < j \le n\). 

\begin{algorithm}
	\caption{Riemannian gradient synchronization with inexact iteration}\label{alg:2}
	\begin{algorithmic}[0]
		\State \textbf{Input:} The symmetric observation matrix $\ma = [\ma_{ij}]_{1 \le i, j \le n}$, the maximum number of iterations $T$, the step size $\mu$.
		\State \textbf{Spectral initialization: } 
		Let \(\my \in \R^{nd \times d}\) be the top \(d\) eigenvectors of \(\ma = \mz \mz^\T + \sigma \mw\) with \(\my^\T \my = \midd\). 
		Set $\mx^0 = \opn(\my)$. 
		\State \textbf{Riemannian gradient updates:} 
		\For{$t = 0, 1, 2, \cdots, T-1$} 
    		\For{$i = 1 \text{ to } n$} 
        		\begin{align*}
					\mgit &= \sum\nolimits_{j \ne i}^{n} (\mxit - \ma_{ij} \mx_{j}^{t}), \\
					\mfit &= \mxit - \mu \optit (\mgit), \\
					\mxitt &= \os (\mfit). 
				\end{align*}
    		\EndFor
		\EndFor
	\end{algorithmic}
\end{algorithm}


\section{Main results}\label{sec:results}
In this section, we establish the convergence of Algorithm \ref{alg:2} for orthogonal group synchronization, under the assumption that the noise level $\sigma$ satisfies near-optimal bounds.
 Recall that  the inexact update step in Algorithm \ref{alg:2} is  \(\mxtt = (\mxtt_1;\cdots;\mxtt_n) = \os (\mft)\). 
For convenience, we denote the exact retraction as $\mxttu = (\mxttu_1; \cdots; \mxttu_n) = (\op (\mft_1); \cdots; \op (\mft_n))$, and define the error sequence \(\dkh{\etf}\) by 
\[\etf := \ucnormf{\mxt - \mxtu}\]
for any \(i \in [n]\). Our main result is as follows:


\begin{theorem}\label{th:main}
	Suppose the noise level 
	\[\sigma \le c_0 \frac{\sqrt{nd^{-1}}}{\sqrtd + 10 \sqrtln}\]
	for some constant \(c_0 > 0\). For any fixed \(\mz \in \odn\), assume that the sequence \(\dkh{\mxt}_{t=0}\) from Algorithm \ref{alg:2} satisfies the criterion \(\etf \le \bar{e}_F \le 10^{-4}\). Then with probability at least $1 - \exp (-nd/2) - O(n^{-10})$, Algorithm \ref{alg:2} with step size \(\mu = 1/n\) satisfies 
	\[\ucdf{\mxt, \mz} \le \frac{1}{2^t} \ucdf{\mx^0, \mz} + 56 \bar{e}_F + 8 c_0 \sqrtn\] 
	for \(0 \le t \le t_0 \le n^{10}\). 
\end{theorem}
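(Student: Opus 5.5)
The plan is an induction on $t$. At each step I compare the inexact iterate $\mxt$ with a ``virtual'' exact update $\mxttu = \opn(\mft)$. The exact generalized-power/Riemannian step contracts the distance to $\mz$ by a factor of $1/2$, and the Newton--Schulz error $\etf \le \ebarf$ enters only additively.

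\textbf{Step 1: simplify one exact step.} With $\mu = 1/n$, every $\mxit$ is close to $\odgroup$, so $\optit(\mgit)$ can be expanded. Using $\sum_{j\ne i}\ma_{ij}\mx_j^t = (\ma\mxt)_i$, the point $\mfit$ equals, up to a small correction, a symmetrized version of $\frac1n(\ma\mxt)_i$. Note that $\mxit - \frac1n\optit(\mgit)$ averages $\mxit$ with $\frac1n\mxit(\ma\mxt)_i^\T\mxit$. Writing $\ma\mxt = \mz\mz^\T\mxt + \sigma\mw\mxt$, I will show the following. For $\mqt$ the optimal rotation aligning $\mxt$ to $\mz$, each block $\mfit$ has the form $c_t\,\mz_i\mqt$ plus a perturbation. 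Here $c_t$ is a positive-definite-type factor bounded away from $0$, and the perturbation is controlled by $\frac1n\|\mz^\T\mxt - n\mqt\|$, by $\frac{\sigma}{n}\|(\mw\mxt)_i\|$, and by terms of order $\|\mxit^\T\mxit - \midd\|\lesssim \etf$.

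\textbf{Step 2: use the Lipschitz property of $\op$ on well-conditioned blocks.} Since $\|\op(\ma)-\op(\mathbf B)\|_F\le \frac{2}{\sigma_{\min}(\ma)+\sigma_{\min}(\mathbf B)}\|\ma-\mathbf B\|_F$, this yields
\[
\ucdf{\mxttu,\mz}\le \tfrac12\,\ucdf{\mxt,\mz} + C\sigma\sqrt d\,\tfrac{\|\mw\|}{n}\sqrtn + C'\etf .
\]
The noise term uses $\|\mw\|\le 3\sqrt{nd}$, which holds with probability at least $1-\exp(-nd/2)$. The hypothesis on $\sigma$ then turns $\sigma\|\mw\|\sqrt d/\sqrtn$ into at most $c_0\sqrtn$ times a constant. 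The triangle inequality gives $\ucdf{\mxtt,\mz}\le \ucdf{\mxttu,\mz}+\ettf$. Unrolling the recursion $a_{t+1}\le a_t/2 + B$ gives $a_t\le 2^{-t}a_0+2B$, which matches the stated constants ($56\ebarf$ and $8c_0\sqrtn$) once $C$ and $C'$ are tracked.

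\textbf{Step 3: keep the induction inside the basin.} For the step-2 bound I must control $\sigma_{\min}(\mfit)$ blockwise. This requires a \emph{per-block} bound of the form $\max_i\|\mxit-\mz_i\mqt\|\le \frac14$, not merely a global Frobenius bound. Blockwise control needs $\|(\mw\mxt)_i\|\lesssim\sqrt n(\sqrtd+\sqrtln)$, and this is the main obstacle, because $\mxt$ depends on $\mw$.

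To handle it I will follow Zhong--Boumal and Ling. For each $l\in[n]$, I run Algorithm~\ref{alg:2} on the matrix $\ma^{(l)}$ in which the $l$-th block row and column of $\mw$ are zeroed, including the spectral initialization. This produces leave-one-out iterates $\mxtl$, which are independent of $\mw_l$. I will then carry three induction hypotheses together:
\begin{itemize}
\item the global distance bound;
\item a proximity bound $\ucdf{\mxt,\mxtl}\lesssim \sigma\sqrt{d}(\sqrtd+\sqrtln)/\sqrtn$ plus $\ebarf$-terms;
\item the blockwise bound $\|\mxltl - \mz_l\mq^{t,(l)}\|$ small.
\end{itemize}
Independence gives $\|\mw_l^\T\mxtl\|\lesssim\sqrt{n}(\sqrtd+\sqrtln)$ with probability $1-O(n^{-11})$. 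A union bound over $l$ and over $t\le n^{10}$ then accounts for the $O(n^{-10})$ failure probability. Proximity is propagated by the same contraction argument applied to the difference of the two updates, which differ only through $\sigma\mw_l$ terms.

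The base case comes from Davis--Kahan applied to the spectral initialization, again separately for each leave-one-out version. The $\ebarf\le10^{-4}$ assumption keeps every Newton--Schulz output well-conditioned, so that $\|\mxit^\T\mxit-\midd\|$ stays below a small constant at every iteration.
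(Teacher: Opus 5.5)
Your architecture matches the paper's. You compare $\mxt$ with the exact retraction $\mxttu=\opn(\mft)$, get a $\frac12$-contraction with additive $\ebarf$ and $c_0\sqrtn$ terms, take blockwise control and incoherence from leave-one-out sequences independent of $\mwl$, and start from Davis--Kahan. Your two variations are harmless. Bounding the global noise term by $\frac{\sigma}{n}\ucnorms{\mw}\ucnormf{\mxt}\le 3\sigma d\le 3c_0\sqrtn$ instead of through incoherence only changes the constant in front of $c_0\sqrtn$. Putting the blockwise hypothesis on $\mxltl$ (Zhong--Boumal style) instead of on $\max_i\ucnormf{\mxit-\mqt}$ as in \eqref{induct:xti} is workable, since the $l$-th block row of $\mwll$ vanishes and proximity transfers the bound to $\mx^t_l$.

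The substantive problem is Step~1. It carries the whole contraction, it is asserted rather than derived, and the formula you give is wrong. Since $\mathcal P_{T_{\mx_i}}(\mx_i)=\mzero$ for $\mx_i\in\odgroup$, with $\mu=1/n$ one has, up to Newton--Schulz corrections, $\mfit=\mxit+\frac12(\bm{M}_i-\mxit\bm{M}_i^\top\mxit)$, where $\bm{M}_i=\frac1n\sum_{j\ne i}\ma_{ij}\mx_j^t$. That is, $\mfit$ is $\mxit$ plus the tangent projection of $\bm{M}_i$, not an average of $\mxit$ and $\mxit\bm{M}_i^\top\mxit$. Taken literally, your averaging gives $\mfit-\mqt\approx\frac32(\mxit-\mqt)+\cdots$, hence no contraction. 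The mechanism that works is the exact split
\begin{align*}
\mft-\mz\mqt={}&(\oi-\opttu)(\mxt-\mxtu)+(\oi-\opttu)(\mxtu-\mz\mqt)\\
&+\opttu\big(\mxt-\mz\mqt-\mu\mgt\big)+\mu(\opttu-\optt)(\mgt).
\end{align*}
Three facts make this split contract:
\begin{itemize}
\item The second term is quadratic, because $(\oi-\optitu)(\mxitu-\mq)=\frac12\mxitu(\mxitu-\mq)^\top(\mxitu-\mq)$ for $\mq\in\odgroup$. This curvature term, not only the conditioning of $\mfit$, is why the blockwise bound is needed.
\item The third term involves $\mxt-\mz\mqt-\mu\mgt=\frac1n\mz(\mz^\top\mxt-n\mqt)+\frac{\sigma}{n}\mw\mxt$ exactly.
\item You still need the alignment estimate of Lemma~\ref{le:sum:normf}, $\ucnormf{\mz^\top(\mxt-\mz\mqt)}\lesssim\ep\sqrtn\,\ucdf{\mxt,\mz}+\sqrtn\,\etf$. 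It is second order: for $\mx\in\odn$ with polar decomposition $\mz^\top\mx=\mq\mh$, one has $\mz^\top(\mx-\mz\mq)=\mq(\mh-n\midd)$ and $\ucnormf{\mh-n\midd}\le\tr(n\midd-\mh)=\frac12\ucdf{\mx,\mz}^2$.
\end{itemize}
Your list of perturbation sources suggests only the naive bound $\ucnorms{\mz}\,\ucnormf{\mxt-\mz\mqt}$. That bound gives coefficient $1$ in front of $\ucdf{\mxt,\mz}$ and destroys the contraction.

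Two bookkeeping points also need repair.
\begin{itemize}
\item \emph{Union bound.} With failure probability $O(n^{-11})$ per pair $(l,t)$, a union bound over $n\cdot n^{10}$ pairs gives $O(1)$, not $O(n^{-10})$. You need about $n^{-21}$ per event. The Gaussian tail supplies this if you enlarge the constant in front of $\sqrt{\log n}$; with $10\sqrt{\log n}$, Lemma~\ref{le:inh:bound} gives $n^{-45}$.
\item \emph{Auxiliary sequences.} They should use the exact retraction $\op$, as in Algorithm~\ref{alg:leave one out}, not Algorithm~\ref{alg:2} run on $\ma^{(l)}$. The hypothesis $\etf\le\ebarf$ covers only the main sequence, so the Newton--Schulz errors of the auxiliary runs would be uncontrolled. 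Also, Lemma~\ref{le:inh:bound} is stated for $\mx\in\odn$. With exact auxiliary iterates (still independent of $\mwl$), the mismatch enters the proximity recursion additively through $\ettf$ and $\etf$, which is exactly your ``plus $\ebarf$-terms''.
\end{itemize}
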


\begin{remark}
In Theorem \ref{th:main}, we require the inexact error  $\etf=\ucnormf{\mxit - \mxitu}  \le 10^{-4}$. 
According to Lemma \ref{le:ns:bound},  the Newton-Schulz iteration exhibits  quadratic convergence under suitable conditions, which are verified to hold for all iterations (see Lemma \ref{le:error}). 
Consequently,  the inexact error satisfies
	\[\ucnormf{\mxt - \mxtu} \le \sqrtnd \max_{1 \le i \le n} \ucnorms{\mxit - \mxitu} \le 10^{-4},\]
with only \(O \xkh{\log \log \xkh{nd}}\) Newton-Schulz steps  for each \(i \in [n]\), making Algorithm \ref{alg:2} computationally efficient. In practice, our numerical experiments indicate that approximately five steps are sufficient to achieve high accuracy.
\end{remark}

\begin{remark}
In Theorem \ref{th:main}, we adopt a  fixed step size of $1/n$ to simplify the analysis.  In fact, the step size can be relaxed (e.g., to $(0, 2/n)$) with a more intricate proof. 
\end{remark}

\begin{remark}
	It is worth noting that when \(\bar{e}_F = 0\) and \(\sigma \leq c_0 \sqrt{nd^{-1}}/(\sqrtd + 10 \sqrtln)\),  the convergence result of Theorem \ref{th:main}, $\ucdf{\mxt, \mz} \le \frac{1}{2^t} \ucdf{\mx^0, \mz} + 8 c_0 \sqrtn$, aligns with the the best theoretical bounds reported in \cite{lingshuyang2022improved}. 
\end{remark}



\section{Numerical Experiments}\label{sec:experiments}
This section presents a series of numerical experiments conducted on both synthetic and real-world datasets to evaluate the performance of our proposed method. 
All simulations are implemented in MATLAB R2023b and executed with an Intel Core i9-14900HX CPU and 32GB of RAM. 

\subsection{Synthetic data}\label{sec:synthetic data}
In our experiments, we consider the orthogonal group $\odgroup$ with \(d = 25\). 
The ground truth $\mz=(\mz_1; \cdots; \mz_n)^\T$ is constructed by generating i.i.d. standard Gaussian matrices, followed by a projection onto the $\mathrm{O}(d)$ manifold. 
We implement an incomplete observation model with a sampling rate $p \in (0,1]$, as follows 
\begin{align*} 
	\ma_{ij} = \left\{\begin{aligned}
		& \mz_i \mz_j^\T + \sigma \mw_{ij} ,  &&(i,j) \in \mathcal{A},\\
		& \mzero ,                            &&(i,j) \in \mathcal{A}^c,
	\end{aligned}\right. \quad \mbox{with} \ \ (i, j) \in \left\{\begin{aligned}
		& \mathcal{A}   ,  &&\mbox{with ratio}\ p,\\
		& \mathcal{A}^c ,  &&\mbox{with ratio}\ 1-p.
	\end{aligned}\right.
\end{align*}
Note that for \(p = 1\), the observation corresponds to full sampling. 


We evaluate our approach in comparison with the GPM \cite{lingshuyang2022improved} and RTR \cite{boumal2015riemannian}, with the latter utilizing the MATLAB Manopt toolbox to handle subproblems. 
We define the relative error as 
\begin{equation*} 
	\text{Rel. Err.} = \frac{\ucnormf{\mz\mz^\T - \mxt(\mxt)^\T}}{\ucnormf{\mz\mz^\T}}.
\end{equation*}
The stopping criteria for our method and GPM are defined as the relative decrease in the objective function value in \eqref{def:f} as 
\begin{equation*} 
	R^t = \frac{F(\mxt)-F(\mxtt)}{F(\mxtt)}. 
\end{equation*}
Execution terminates when $R^t < 10^{-8}$ or iterations reach $\text{MaxIter} = 100$. 
Additionally, the termination criterion for RTR is that the optimal condition matrix $\ms = {\bm \Lambda} - {\bm A}$ \cite[Proposition 5.1]{lingshuyang2023soving} satisfies $\lambda_{\min} (\ms) \ge -10^{-10}$, which is numerically non-negative. 
We set \(n = 500\), \(\sigma \in \dkh{0.02,0.1,0.2}\), and \(p \in \dkh{0.5,0.8,1.0}\). The step size for our method is set to be \(\mu = 1/np\). 
To ensure a fair comparison, all three approaches are initialized using the same spectral method as a consistent baseline. Table \ref{table:1} summarizes the numerical results, reporting the mean relative error and CPU time averaged over 10 independent trials across various parameter settings. 
By substituting SVD with a single-step Newton-Schulz iteration, we achieve high-precision retractions. Empirical evidence indicates that a single iteration is computationally sufficient, providing a robust balance between approximation error and computational efficiency. 
Our proposed algorithm achieves significantly faster convergence compared to existing methods, providing an approximately 1.7\(\times\) speedup with only a negligible impact on the relative error. 

Generally, as the noise level $\sigma$ increases, the accuracy of the recovery results decreases. 
This behavior is consistent with our theoretical findings. Additionally, we observe that reducing the observation probability $p$, i.e., increased graph sparsity, further degrades algorithmic accuracy. 

\renewcommand{\arraystretch}{1.2}

\begin{table}[ht]
\centering
\caption{Numerical results for varying noise levels and observation rates. }\label{table:1}
\begin{tabular}{l|cc|cc|cc}
\toprule
 \multicolumn{1}{c}{} & \multicolumn{2}{c}{$\sigma = 0.02$} & \multicolumn{2}{c}{$\sigma = 0.1$} & \multicolumn{2}{c}{$\sigma = 0.2$} \\
\cmidrule(lr){2-3} \cmidrule(lr){4-5} \cmidrule(lr){6-7}
Algorithm & Rel. Err. & Time (s) & Rel. Err. & Time (s) & Rel. Err. & Time (s) \\
\midrule
\multicolumn{7}{c}{$n = 500,\ \, p = 1$} \\
\cmidrule(l{4pt}r{4pt}){1-7}
RTR & 4.38E-03 & 8.314 & 2.19E-02 & 8.874 & 4.38E-02 & 9.146 \\
GPM & 4.38E-03 & 0.282 & 2.19E-02 & 0.287 & 4.38E-02 & 0.289 \\
NS-RGS & 4.38E-03 & \bf{0.154} & 2.19E-02 & \bf{0.158} & 4.38E-02 & \bf{0.161} \\
\midrule
\multicolumn{7}{c}{$n = 500,\ \, p = 0.8$} \\
\cmidrule(l{4pt}r{4pt}){1-7}
RTR & 4.90E-03 & 9.076 & 2.45E-02 & 9.339 & 4.91E-02 & 9.387 \\
GPM & 4.90E-03 & 0.292 & 2.45E-02 & 0.302 & 4.91E-02 & 0.423 \\
NS-RGS & 4.90E-03 & \bf{0.158} & 2.45E-02 & \bf{0.163} & 4.91E-02 & \bf{0.234} \\
\midrule
\multicolumn{7}{c}{$n = 500,\ \, p = 0.5$} \\
\cmidrule(l{4pt}r{4pt}){1-7}
RTR & 6.21E-03 & 9.162 & 3.11E-02 & 9.879 & 6.21E-02 & 9.747 \\
GPM & 6.21E-03 & 0.420 & 3.11E-02 & 0.422 & 6.21E-02 & 0.434 \\
NS-RGS & 6.21E-03 & \bf{0.241} & 3.11E-02 & \bf{0.244} & 6.21E-02 & \bf{0.240} \\
\bottomrule
\end{tabular}
\end{table}
\renewcommand{\arraystretch}{1}

\subsection{Real data}
In this experiment, we investigate the global alignment problem of 3D scans using the Lucy dataset from the Stanford 3D Scanning Repository, as in \cite{wanglanhui2013exact}. 
Our experiments utilize a downsampled subset comprising \(n = 168\) scans with \(d = 3\) and approximately four million triangles, from which 2,628 pairwise transformations are derived based on preliminary estimates. Consequently, the sparsity is roughly $p = 0.187$. 
These transformations are perturbed by Gaussian noise with $\sigma = 0.05$. The evaluation metrics, including relative error and stopping criteria, are consistent with the experimental protocols detailed in Section \ref{sec:synthetic data}. 
We also define the mean squared error (MSE) \cite{wanglanhui2013exact} of the estimated rotation matrices \(\hat{\mx}_1, \cdots, \hat{\mx}_n\) as 
\begin{equation*} 
	\text{MSE} := \frac{1}{n} \ucdf{\hat{\mx}, \mx}^2 = \min_{\mq \in \odgroup} \frac{1}{n} \sumin \ucnormf{\hat{\mx}_i - \mx_i \mq}^2.
\end{equation*}

As demonstrated in Table \ref{table:2}, our method yields competitive accuracy compared to the GPM baseline while achieving a roughly $2.3\times$ speedup. 
Figure \ref{fig:mse} further illustrates the MSEs and the unsquared residuals \(\ucnormf{\hat{\mx}_i - \mx_i \mq}\) (\(i = 1, \cdots, 168\)). 
These comparisons indicate that our method achieves a precision level comparable to both the RTR and GPM baselines, with a negligible relative error of less than $0.3\%$. 
This confirms that the Newton-Schulz approximation can maintain the reconstruction performance of exact retraction methods.

\renewcommand{\arraystretch}{1.2}

\begin{table}[ht]
\centering
\caption{Numerical results for Lucy dataset.}\label{table:2}
\begin{tabular}{l|cc}
\toprule
Algorithm & Rel. Err. & Time (s)  \\
\midrule
RTR & 1.52E-02 & 0.207  \\
GPM & 1.52E-02 & 0.055 \\
NS-RGS & 1.52E-02 & \bf{0.023}  \\
\bottomrule
\end{tabular}
\end{table}
\renewcommand{\arraystretch}{1}

\begin{figure}[H] 
	\centering 
	\includegraphics[width=0.5\textwidth]{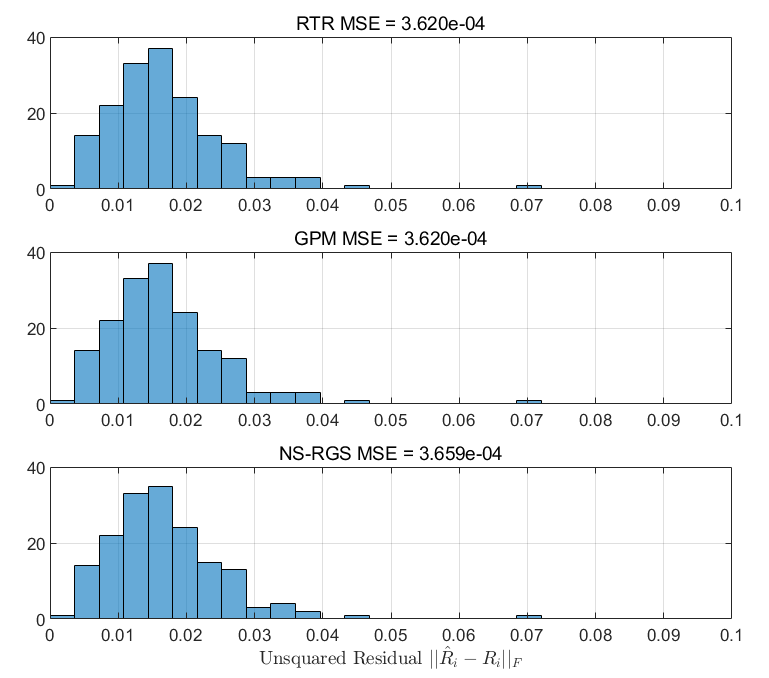} 
	\caption{Histogram of the unsquared residuals of RTR, GPM and NS-RGS for the Lucy dataset.} 
	\label{fig:mse} 
\end{figure}

The reconstruction results are illustrated in Figure \ref{fig:reconstruction}. 
Notably, at a noise level of $\sigma=0.05$, delicate structures such as the torch exhibit ghosting artifacts. This phenomenon stems from the intrinsic limitations of the $\ell_2$-based objective function when subjected to substantial noise, rather than the deficiency in the algorithm itself. To further analyze the error distribution, Figure \ref{fig:heatmap} provides a difference heatmap. 
After performing rigid alignment to account for global rotation, the Euclidean distances between the reconstructed vertices and the ground truth are computed and mapped onto the 3D model, where blue and red represent low and high error magnitudes, respectively. 
The heatmap reveals a predominantly blue surface, suggesting that all three methods maintain high reconstruction quality across most regions.

\begin{figure}[H] 
	\centering 
	\includegraphics[width=1\textwidth]{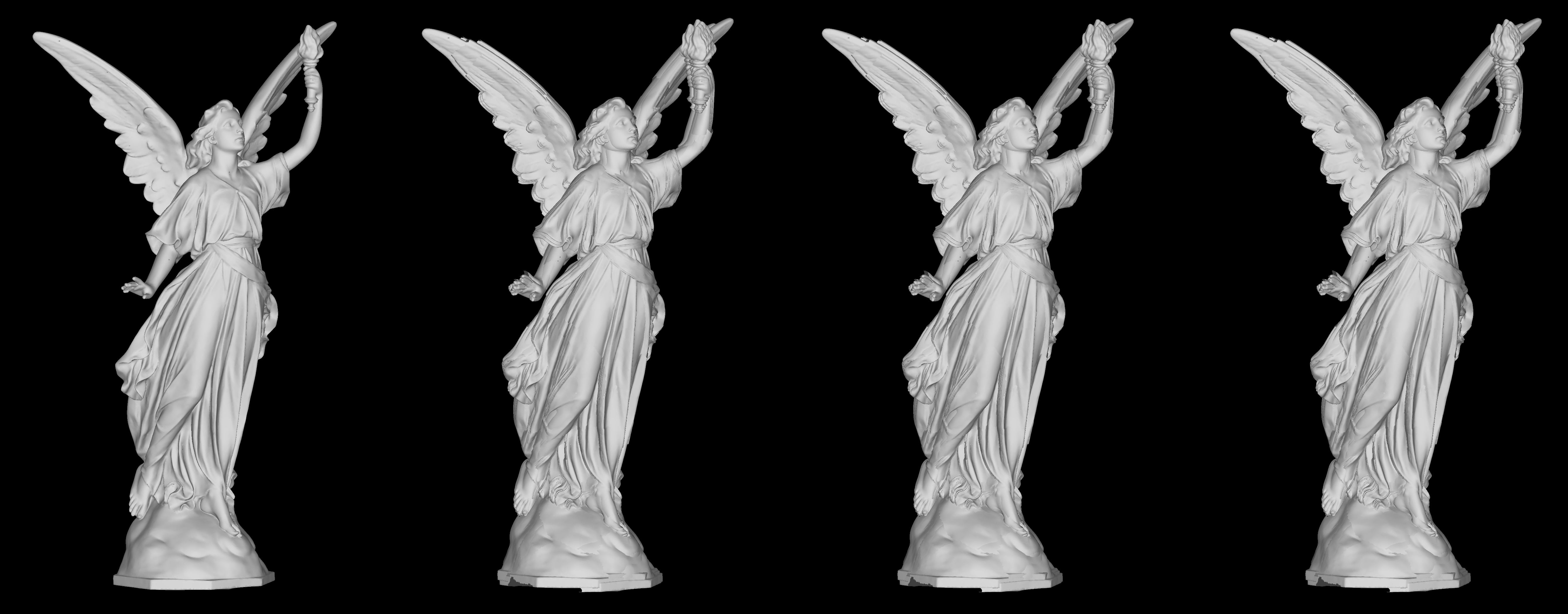} 
	\caption{Views of the reconstructions from the Lucy dataset. The left one is the ground truth. The last three are the reconstruction results of the RTR, GPM, and NS-RGS methods, respectively.} 
	\label{fig:reconstruction} 
\end{figure}

\begin{figure}[H] 
	\centering 
	\includegraphics[width=0.95\textwidth]{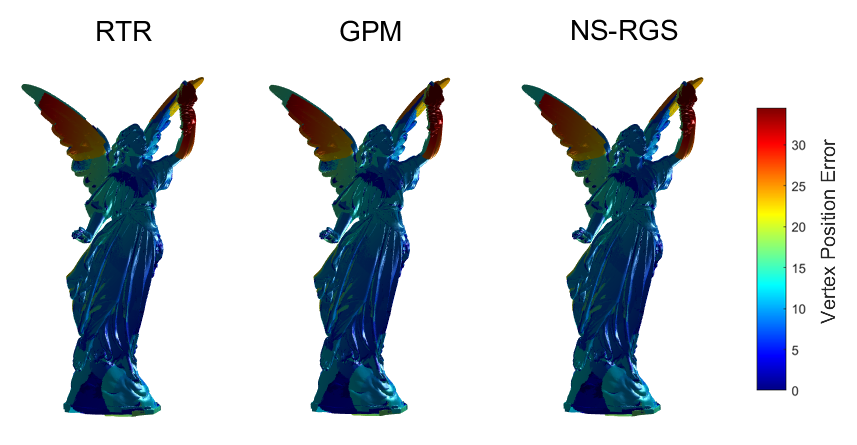} 
	\caption{Difference heatmaps for the Lucy dataset. The entire model is approximately \(930 \times 530 \times 1600\) units.} 
	\label{fig:heatmap} 
\end{figure}

\section{Proofs}
Recall the observation is \(\ma = \mz \mz^\T + \sigma \mw \in \R^{n d \times n d}\) with the ground truth \(\mz = \xkh{\mz_1; \cdots; \mz_n}\). 
Denoting \(\md = {\rm diag} (\mz_1, \cdots, \mz_n)\), 
one can notice that 
\[\md^\T \ma \md = \zkh{\midd + \sigma \mz_i^\T \mw_{ij} \mz_j}_{1 \le i, j \le n}\]
by simple linear operations. 
Combining this argument with the rotation invariance of Gaussian random matrices, throughout this section, one supposes that the ground truth \(\mz = (\midd; \cdots; \midd) \in \odn\) without loss of generality. 

\subsection{Error contraction}
The essence of the argument in this section is the region of incoherence and contraction (RIC), defined as 
\begin{subequations}\label{def:ric}
	\begin{align}
	 	\label{def:ric:1} \ucdf{\mxt, \mz}&\le \ep \sqrtn, \\
		\label{def:ric:2} \max_{1 \le l \le n}\ucnormf{\mw_l^\T \mxt}&\le 2\sqrtnd \axkh{\sqrtd + 10 \sqrtln}. 
	\end{align}
\end{subequations}
The condition \eqref{def:ric:2} reflects the approximate independence between \(\mxt\) and the noise matrix \(\mw = (\mw_1, \cdots, \mw_n)^\T\). 
As demonstrated in Lemma \ref{le:error}, the error contraction of the iterative sequence \(\dkh{\mxt}_{t=0}\) can be deduced from the RIC. 
We subsequently use induction to show that \(\dkh{\mxt}_{t=0}\) remains contained within this region throughout. 

\begin{lemma} \label{le:error}
	Suppose the noise level 
	\begin{equation}\label{condition:noise}
		\sigma \le c_0 \frac{\sqrt{nd^{-1}}}{\sqrtd + 10 \sqrtln}
	\end{equation}
	for some constant \(c_0 > 0\), and \(\etf \le \bar{e}_F \le O(1)\).  There exists an event that does not depend on $t$ and has probability at least $1 - \exp (-nd/2) - O(n^{-40})$, such that when it happens and 
	\begin{equation} \label{condition:contract}
		\ucdf{\mxt, \mz}\le \ep \sqrtn, \ \max_{1 \le i \le n}\ucnormf{\mxit - \mqt}\le 3 \ep, \  \max_{1 \le l \le n}\ucnormf{\mw_l^\T \mxt}\le 2\sqrtnd \axkh{\sqrtd + 10 \sqrtln}
	\end{equation}
	hold for \(\mqt = \op (\mz^\T \mxt)\) and some some sufficiently small constant $\ep>0$, one has
	\begin{equation}\label{eq:conclusion:sigma}
		\max_{1\le i\le n} \ucnorms{\midd - (\mf_{i}^{t})^\T \mf_{i}^{t}} < 1 ,  
	\end{equation}
	and
	\begin{equation}\label{eq:conclusion:error}
		\ucdf{\mxtt, \mz} \le \frac 12 \ucdf{\mxt, \mz} + 28 \bar{e}_F  + 4 c_0 \sqrtn .  
	\end{equation}
	Here, \(\mfit = \mxit - \mu \optit (\mgit)\) denotes the term prior to the retraction step in Algorithm \ref{alg:2}. 
\end{lemma}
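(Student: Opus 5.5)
We work with $\mz=(\midd;\cdots;\midd)$ and $\mu=1/n$, and expand $\mfit$ explicitly. Since $\ma_{ij}=\midd+\sigma\mw_{ij}$ for $i\ne j$,
\[
\mgit=(n-1)\mxit-\sum_{j\ne i}\mx_j^t-\sigma\mw_i^\T\mxt=n\mxit-\mz^\T\mxt-\sigma\mw_i^\T\mxt ,
\]
with $\mw_{ii}=\mzero$. Write $\mz^\T\mxt=\sum_j\mx_j^t=n\bar{\mx}^t$.

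The projection satisfies $\optit(\mxit)=\frac12(\mxit-\mxit(\mxit)^\T\mxit)$. This vanishes only when $\mxit\in\odgroup$, so it is $O(\ucnorms{\midd-(\mxit)^\T\mxit})$, which is controlled by the inexactness. Setting $\mh_i:=\bar{\mx}^t+\frac{\sigma}{n}\mw_i^\T\mxt$, we obtain
\[
\mfit=\mxit-\optit(\mxit)+\optit(\mh_i)=\tfrac12\mxit+\tfrac12\mxit(\mxit)^\T\mxit+\tfrac12\mh_i-\tfrac12\mxit\mh_i^\T\mxit .
\]
If $\mxit$ were orthogonal this would be $\mxit+\frac12(\mh_i-\mxit\mh_i^\T\mxit)$. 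The key observation is that $\bar{\mx}^t\approx\mqt$ up to $O(\ep)$, since $\|\bar{\mx}^t-\mqt\|\le\ucdf{\mxt,\mz}/\sqrtn\le\ep$. Together with $\ucnorms{\mxit-\mqt}\le 3\ep$, this gives
\[
\mfit=\mqt+\mathbf{\Delta}_i,\qquad \ucnorms{\mathbf{\Delta}_i}\lesssim \ep+\frac{\sigma}{n}\ucnormf{\mw_i^\T\mxt}+\ucnorms{\midd-(\mxit)^\T\mxit}.
\]

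For \eqref{eq:conclusion:sigma}, the second term is at most $2c_0$ by \eqref{def:ric:2} and \eqref{condition:noise}. The third term is handled via $\mxit-\mxitu$ with $\mxitu\in\odgroup$: it is bounded by $3\ucnormf{\mxit-\mxitu}\le 3\bar e_F$, which is a small constant. Hence $\ucnorms{\mathbf{\Delta}_i}<1/4$ for $\ep,c_0,\bar e_F$ small, and $\ucnorms{\midd-(\mfit)^\T\mfit}\le 2\ucnorms{\mathbf{\Delta}_i}+\ucnorms{\mathbf{\Delta}_i}^2<1$. The event is the one on which $\ucnorms{\mw}\le 2\sqrt{nd}+$ small and the blockwise bounds used below hold. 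It is independent of $t$, and its probability follows from standard Gaussian matrix concentration (e.g. $\ucnorms{\mw}\le 3\sqrt{nd}$ with probability $1-e^{-nd/2}$).

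For \eqref{eq:conclusion:error}, we use the triangle inequality
\[
\ucdf{\mxtt,\mz}\le \ucnormf{\mxtt-\mxttu}+\ucdf{\mxttu,\mz}\le \bar e_F+\ucdf{\mxttu,\mz}.
\]
We then analyze the exact retraction $\mxttu_i=\op(\mfit)$ and compare with GPM-type analysis (as in Ling's work). Stack $\mft=\mxt+\frac12(\mh-\mxt\star\mh^\T\mxt)+\mathbf{E}$, where $\mathbf{E}$ collects the non-orthogonality terms, with $\ucnormf{\mathbf{E}}\lesssim\bar e_F$. Linearize $\op$ around $\mqt$ using the Lipschitz bound $\ucnormf{\op(\ma)-\op(\mathbf{B})}\le \frac{2}{\sigma_{\min}(\ma)+\sigma_{\min}(\mathbf{B})}\ucnormf{\ma-\mathbf{B}}$ together with the first-order expansion $\op(\mqt+\mathbf{\Delta})=\mqt(\midd+\text{skew part of }(\mqt)^\T\mathbf{\Delta})+O(\|\mathbf{\Delta}\|^2)$.

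The symmetric part of the deviation is killed to first order, and this is where the contraction factor $1/2$ comes from. In the error $\mxt-\mz\mqt$, the averaging term $\bar{\mx}^t$ removes the component aligned with $\mz$: the skew part of $(\mqt)^\T(\mxit-\mqt)$ appears with coefficient $1-\frac12-\frac12$ from $\mxit$ and $-\mxit\mh_i^\T\mxit$, except for the residual $\frac12\cdot$(sum-average), which has norm at most $\frac12\ucdf{\mxt,\mz}$ after summing over $i$. The quadratic terms contribute $O(\ep)\cdot\ucdf{\mxt,\mz}$, which is absorbed for small $\ep$. The noise terms contribute $\frac{\sigma}{n}\ucnormf{\mw\mxt}\le\frac{\sigma}{n}\ucnorms{\mw}\sqrtnd\le 3\sigma d\le$, using $\ucnormf{\mxt}\le\sqrt{nd}(1+\bar e_F)$. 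This gives $\lesssim c_0\sqrtn$ by \eqref{condition:noise}, with careful constants yielding $4c_0\sqrtn$.

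The main obstacle is the inexactness bookkeeping. Every place where $\mxit\notin\odgroup$ is used — the pseudo-projection, $\ucnormf{\mxt}$, and the equality $(\mxit)^\T\mxit=\midd$ in the tangent-space algebra — generates an error proportional to $\ucnormf{\mxt-\mxtu}$, amplified by Lipschitz constants of $\op$ near singular values close to $1$. To keep the constants explicit (summing to $28\bar e_F$), I would first replace $\mxt$ by $\mxtu$ in $\mft$, control the difference $\ucnormf{\mft-\tilde{\mf}^t}\le c\,\bar e_F$ with $c$ accounting for the cubic term and the $\frac1n\sum_j$ terms, and then run the exact GPM-style contraction on $\tilde{\mf}^t$, where the clean algebra applies.
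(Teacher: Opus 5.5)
Your treatment of \eqref{eq:conclusion:sigma} is fine and essentially matches the paper's. You bound $\ucnorms{\mfit-\mqt}\le 1/4$ from the $3\ep$ hypothesis, $\ucnorms{\bar{\mx}^t-\mqt}\le\ep$, the $2c_0$ noise bound and the inexactness, and then apply Weyl.

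The gap is in \eqref{eq:conclusion:error}, at the step that is supposed to produce the contraction. After the correct cancellation of the individual deviations $\mxit-\mqt$ in the skew part, block $i$ still has a first-order term $\frac12\mathbf{M}-\frac12\mqt\mathbf{M}^\T\mqt=\mqt\,\mathrm{skew}\big((\mqt)^\T\mathbf{M}\big)$. Here $\mathbf{M}=\bar{\mx}^t-\mqt=\frac1n\mz^\T(\mxt-\mz\mqt)$ is the same for every $i$. The only estimate you give for it is $\ucnormf{\mathbf{M}}\le\ucdf{\mxt,\mz}/\sqrtn$. Stacking $n$ identical blocks then gives $\ucdf{\mxt,\mz}$, not $\frac12\ucdf{\mxt,\mz}$: each of the two halves contributes up to $\frac12\ucdf{\mxt,\mz}$. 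Adding the $O(\ep)\,\ucdf{\mxt,\mz}$ quadratic terms yields a factor $1+O(\ep)$, i.e.\ no contraction. So your $\frac12$ is asserted, not derived. It also does not come from $\op$ annihilating symmetric parts; the paper uses only the crude Lipschitz factor $2$ from Lemma \ref{le:op:stable}.

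What is missing is the role of the choice $\mqt=\op(\mz^\T\mxt)$. It makes $(\mqt)^\T\mz^\T\mxt$ symmetric positive semidefinite, so $\mathrm{skew}\big((\mqt)^\T\mathbf{M}\big)=0$ exactly, even for non-orthogonal $\mxt$. Quantitatively, $\mz^\T(\mxt-\mz\mqt)$ is second order: Lemma \ref{le:sum:normf} gives $\ucnormf{\mz^\T(\mxt-\mz\mqt)}\le 4\ep\sqrtn\,\ucdf{\mxt,\mz}+12\sqrtn\,\etf$. This rests on $\ucnorms{\mz^\T\mx}_*=nd-\frac12\ucdf{\mx,\mz}^2$ for $\mx\in\odn$ (proof of Lemma \ref{le:sigma}).

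The paper uses this through the identity $\mxt-\mu\mgt=\frac1n\mz\mz^\T\mxt+\frac{\sigma}{n}\mw\mxt$ at $\mu=1/n$, which is the counterpart of your $1-\frac12-\frac12$ cancellation. It splits $\mft-\mz\mqt$ into three pieces:
\begin{itemize}
\item a normal component $I_1\le\frac18(\etf+\ucdf{\mxt,\mz})$, which is quadratic because $(\oi-\optitu)(\mxitu-\mq)=\frac12\mxitu(\mxitu-\mq)^\T(\mxitu-\mq)$;
\item the power-step error $I_2\le\frac18\ucdf{\mxt,\mz}+12\etf+2c_0\sqrtn$, which is exactly where Lemma \ref{le:sum:normf} enters;
\item the pseudo-projection error $I_3\le\frac14\etf$.
\end{itemize}
Hence $\ucnormf{\mft-\mz\mqt}\le\frac14\ucdf{\mxt,\mz}+O(\etf)+2c_0\sqrtn$, and the Lipschitz factor $2$ then gives the $\frac12$.

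If you insert the symmetry fact, your linearization route does go through, with a contraction factor $O(\ep+c_0+\ebarf)$. You would still need a quantitative second-order remainder bound for $\op$ near $\mqt$ to reach explicit constants like $28$ and $4$; the paper's route avoids this.
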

\begin{proof}
	See Section \ref{pf:error}.
\end{proof}

From Lemma \ref{le:error}, one sees that  if \eqref{condition:contract} and \(\etf \le \bar{e}_F \le O(1)\) hold for the previous $t$ iterations, then
\begin{align}
	\ucdf{\mxtt, \mz} & \le \frac 12 \ucdf{\mxt, \mz} + 28 \bar{e}_F + 4 c_0 \sqrtn \notag \\ 
	& \le \frac{1}{2^{t+1}} \ucdf{\mx^0, \mz} + 56 \bar{e}_F + 8 c_0 \sqrtn, \label{eq:conclusion:contraction} 
\end{align}
which implies the conclusion of Theorem \ref{th:main}.

It is noteworthy that \eqref{eq:conclusion:sigma} is employed to guarantee the condition for Lemma \ref{le:ns:bound}. 
It remains to demonstrate the validity of condition \eqref{condition:contract} for all $0 \le t \le t_0 \le n^{10}$. We proceed by induction: specifically, one shows that if condition \eqref{condition:contract} is satisfied for the iteration $t$, it holds for the subsequent iteration ${t+1}$ with high probability. 
Moreover, assume the initial case $t = 0$ satisfies \(e^0_F \le \bar{e}_F \le O(1)\) and \eqref{condition:contract}, then one completes the proof of Theorem \ref{th:main}. 

For the first part of the condition \eqref{condition:contract},  it follows from \eqref{eq:conclusion:contraction} that if $\ucdf{\mx^0, \mz} \le \ep \sqrtn$ for some constant $\ep > 0$, one has 
\begin{equation}\label{cond:result:1}
	\dist{\mxtt, \mz}  \le \frac{1}{2^{t+1}} \ucdf{\mx^0, \mz} + 56 \bar{e}_F + 8 c_0 \sqrtn \le \ep \sqrtn, 
\end{equation}
provided \(\ebar_F \le O(1)\) and $c_0 < \ep/32$. 
However, verifying the second and third parts of \eqref{condition:contract} is non-trivial due to the statistical dependence between $\mxt$ and the noise $\mw$. To decouple this dependence, we employ a leave-one-out analysis.

\subsection{Leave-one-out sequences for auxiliary purposes}
This section establishes that the second part of condition \eqref{condition:contract} holds with high probability for all $0\le t\le t_0\le n^{10}$. We decouple the statistical dependence between \(\{\mxt\}\) and the noise $\mw$ by introducing an auxiliary sequence $\{\mxtl\}$ ($1 \le l \le n$) that removing the $l$-th row and $l$-th column of \(\mw\).  
Specifically,  for each $1\le l \le n$, we employ the auxiliary noise matrix 
\begin{align*} 
	\mwll_{ij} =   \begin{cases}
		\mw_{ij}, & \mbox{if } i \ne l \mbox{ and } j \ne l, \\
		\mzero,   & \mbox{if } i = l \mbox{ or } j = l, 
	\end{cases}
\end{align*}
the auxiliary observation \(\ma^{(l)} = \mz \mz^\T + \sigma \mwll\), and the following leave-one-out loss function 
\begin{equation*}
	F^{(l)}(\mx) = \sum_{i \ne j}^{n} \frac 12 \ucnormf{\mx_i \mx_j^\T - \ma^{(l)}_{ij}}^2  .
\end{equation*}
Additionally, the sequence \(\{\mxtl\}\) is constructed by running Algorithm \ref{alg:leave one out} with respect to $F^{(l)} (\mx)$. 
It should be noted that the auxiliary sequence is calculated using exact computation to avoid more complex discussions of error estimation. 
Moreover, the auxiliary sequences \(\{\mxtl\}\) and Algorithm \ref{alg:leave one out} are for theoretical analysis only and are excluded from actual computations. 

\begin{algorithm}
	\caption{The $l$-th iterative sequence for group synchronization}\label{alg:leave one out}
	\begin{algorithmic}[0]
		\State \textbf{Input:} The auxiliary matrix $\ma^{(l)}$, the maximum number of iterations $T$, the step size $\mu$.
		\State \textbf{Spectral initialization: } 
		Let \(\my^{(l)} \in \R^{nd \times d}\) be the top \(d\) eigenvectors of \(\ma^{(l)} = \mz^\T \mz + \sigma \mwll\) with \((\my^{(l)})^\T \my^{(l)} = \midd\). 
		Set $\mx^{0,(l)} = \osgn(\my^{(l)})$. 
		\State \textbf{Riemannian gradient updates:} 
		\For{$t = 0, 1, 2, \cdots, T-1$} 
    		\For{$i = 1 \text{ to } n$} 
        		\begin{align*}
					\mgitl &= \sum\nolimits_{j \ne i}^{n} (\mxitl - \ma_{ij} \mx_{j}^{t,(l)}), \\
					\mfitl &= \mxit - \mu \optitl (\mgitl), \\
					\mxittl &= \op (\mfitl). 
				\end{align*}
    		\EndFor
		\EndFor
	\end{algorithmic}
\end{algorithm}

Next, we will demonstrate the incoherence condition  \eqref{condition:contract}  by using the following inductive hypotheses:
\begin{subequations}\label{induct:all}
	\begin{align}
		\label{induct:sigma}\max_{1\le i\le n} \ucnorms{\midd - (\mf_{i}^{t})^\T \mf_{i}^{t}} &< 1\\ 
		\label{induct:xt}\df{\mxt, \mz} &\le \ep \sqrtn,\\
		\label{induct:xti}\max_{1\le i\le n}\normf{\mxit - \mqt} &\le 3 \ep,\\
		\label{induct:xtl}\max_{1\le l\le n}\ucdf{\mxt, \mxtl} &\le 2 \ep,\\
		\label{induct:inh}\max_{1\le l\le n}\ucnormf{\mwl^\T \mxt} &\le 2 \sqrtnd \xkh{\sqrtd + 10 \sqrtln}.
	\end{align}
\end{subequations}
Here, \(\mqt = \op(\mz^\T \mxt)\). 
We next  demonstrate that when \eqref{induct:all} holds true up to  the $t$-th iteration,  the inductive hypotheses \eqref{induct:all} still holds for the $(t+1)$-th iteration.  It is easy to see that \eqref{induct:sigma} and \eqref{induct:xt} are direct consequences of \eqref{eq:conclusion:sigma} and \eqref{eq:conclusion:contraction}. Subsequently, \eqref{induct:xti} and \eqref{induct:xtl} are justified by the following lemma.

\begin{lemma} \label{le:induct:xti & xtl}
	Suppose the noise level $\sigma$ satisfy \eqref{condition:noise}, and the step size $\mu = 1/n$. Let $\mathcal{E}_t$ be the event where \eqref{induct:sigma}-\eqref{induct:inh} hold for the $t$-th iteration. Then there exist an event $\mathcal{E}_{t+1} \subseteq \mathcal{E}_t$ obeying $\PP\xkh{\mathcal{E}_t \cap \mathcal{E}_{t+1}^C}\le \exp (-nd/2) + O(n^{-40})$, one has 
	\begin{equation}\label{eq:xti}
		\max_{1\le i\le n}\normf{\mxitt - \mqtt} \le 3 \ep, 
	\end{equation}
	and
	\begin{equation}\label{eq:xtl}
		\max_{1\le l\le n}\ucdf{\mxtt, \mxttl} \le 2 \ep.
	\end{equation}
\end{lemma}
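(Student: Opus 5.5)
The plan follows the standard leave-one-out template (Zhong--Boumal, Ling): first prove \eqref{eq:xtl}, then deduce \eqref{eq:xti} from it.

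\textbf{Step 1: the leave-one-out distance \eqref{eq:xtl}.} Fix $l$ and let $\mr \in \odgroup$ attain $\ucdf{\mxt,\mxtl}$. Since $\mu = 1/n$ and $\mz = (\midd;\cdots;\midd)$, the pre-retraction points can be written as
\[
\mfit = \mxit - \tfrac1n \optit\Big(\sum_{j\ne i}(\mxit - \mx_j^t) - \sigma\sum_{j} \mw_{ij}\mx_j^t\Big).
\]
This is, up to $O(1/n)$ corrections, a projected version of $\frac1n \mz\mz^\T\mxt + \frac{\sigma}{n}\mw\mxt$. The analogous expression holds for $\mfitl$, with $\mwll$ and $\mxtl$ in place of $\mw$ and $\mxt$.

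I would then bound $\ucnormf{\mft - \mftl \mr}$ by splitting into three pieces:
\begin{itemize}
\item[(a)] a signal part, which contracts with factor about $1/2$ (or smaller). This is the same linearization used in Lemma \ref{le:error}, applied to the difference of the two sequences rather than to the distance to $\mz$;
\item[(b)] a noise part $\frac{\sigma}{n}\mw(\mxt - \mxtl\mr)$, bounded by $\frac{\sigma}{n}\ucnorms{\mw}\,\ucdf{\mxt,\mxtl} \lesssim \frac{\sigma\sqrtnd}{n}\cdot 2\ep$, which is small under \eqref{condition:noise};
\item[(c)] the discrepancy $\frac{\sigma}{n}(\mw - \mwll)\mxtl$. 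This is supported on block row $l$ and block column $l$. Its $l$-th block is $\frac{\sigma}{n}\mw_l^\T\mxtl$. Because $\mw_l$ is independent of $\mxtl$, a Gaussian bound together with a union bound over $l \le n$ and $t \le n^{10}$ gives $\ucnormf{\mw_l^\T\mxtl} \lesssim \sqrtnd(\sqrtd + \sqrtln)$ with probability $1 - O(n^{-40})$. The remaining blocks are $\frac{\sigma}{n}\mw_{il}\mx_l^{t,(l)}$; together they have Frobenius norm at most $\frac{\sigma}{n}\ucnormf{\mw_l}\lesssim \frac{\sigma}{n}\sqrt{n}\,d$.
\end{itemize}
With $\sigma \le c_0\sqrt{n/d}/(\sqrtd + 10\sqrtln)$, piece (c) is $O(c_0)$.

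Next I need a Lipschitz bound for $\op$ near $\odgroup$. By \eqref{induct:sigma}, the singular values of $\mfit$ are close to $1$, so $\op$ is Lipschitz with constant close to $1$ there. Combining this with $\ucnormf{\mxtt - \mxttu}\le\bar e_F$ from Lemma \ref{le:ns:bound} gives
\[
\ucdf{\mxtt,\mxttl} \le \tfrac12\cdot 2\ep + C c_0 + \bar e_F \le 2\ep,
\]
provided $c_0$ and $\bar e_F$ are small relative to $\ep$.

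\textbf{Step 2: the blockwise bound \eqref{eq:xti}.} Write
\[
\mxitt - \mqtt = (\mxitt - \mxittl\mr') + (\mxittl\mr' - \mqtt).
\]
The first term is at most $\ucdf{\mxtt,\mxttl}$. For the second term, I would analyze $\mxittl$ blockwise, using that its row-$i$ noise $\mw_i^{(l)}$ is independent of its iterate except through $i$. A cleaner route is to bound $\mfitt$ directly. Its $i$-th block is
\[
\mxit - \tfrac1n\optit\big(n\mxit - \textstyle\sum_j \mx_j^t - \sigma\mw_i^\T\mxt\big),
\]
and $\frac1n\sum_j\mx_j^t = \frac1n\mz^\T\mxt$, which is close to $\mqt$. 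The noise term is controlled by $\frac{\sigma}{n}\ucnormf{\mw_i^\T\mxt} \le \frac{\sigma}{n}\ucnormf{\mw_i^\T\mx^{t,(i)}} + \frac{\sigma}{n}\ucnorms{\mw_i}\,\ucdf{\mxt,\mx^{t,(i)}}$. The first piece is small by independence and the second by \eqref{induct:xtl}.

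With these bounds, $\ucnormf{\mfit - \mqt} \le \tfrac12\cdot 3\ep + \tfrac{1}{n}O(\ep\sqrtn) + O(c_0)$. Applying the Lipschitz bound for $\op$, then the Newton--Schulz error, and then the stability of $\mq \mapsto \op(\mz^\T\mx)$ (so that $\ucnorms{\mqtt - \mqt}$ is bounded via \eqref{induct:xt}) yields $3\ep$.

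\textbf{Main obstacle.} The hard step is the contraction in (a), which requires a linearization of $\op$ combined with the tangent projection. Since $\mxit$ lies slightly off $\odgroup$, I would control $\optit$ through the bound $\ucnorms{\mxit^\T\mxit - \midd} \lesssim \bar e_F$. The other delicate point is the bookkeeping that makes all events independent of $t$, so that the total failure probability stays at $\exp(-nd/2) + O(n^{-40})$.
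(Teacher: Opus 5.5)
\textbf{Overall structure.} Your architecture matches the paper's. With $\mu=1/n$ one has $\mxit-\mu\mgit=\frac1n\mz^\T\mxt+\frac{\sigma}{n}\mw_i^\T\mxt$. You bound the pre-retraction errors $\ucnormf{\mfit-\mqt}$ and $\ucnormf{\mft-\mftl\mr}$, treat $(\mw-\mwll)\mxtl$ by independence and Lemma~\ref{le:inh:bound}, pass through $\op$ with Lemma~\ref{le:op:stable}, and add the Newton--Schulz error and the drift $\ucnormf{\mqtt-\mqt}$.

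\textbf{The gap: closing \eqref{eq:xti}.} The proof of \eqref{eq:xti} is an induction in which the constant $3\ep$ must reproduce itself, and your bounds do not achieve this.

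First, you claim $\ucnormf{\mfit-\mqt}\le\frac12\cdot 3\ep+\cdots$. After the Lipschitz factor of $\op$ ($\frac87$ in the paper), this term alone costs roughly $\frac32\ep$ or more.

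Second, you bound the drift through \eqref{induct:xt}, as in the paper's $I_5$, via $\ucnormf{\mz^\T(\mxtt-\mxt)}$. That quantity contains $\ucdf{\mxt,\mz}\approx\ep\sqrtn$, so the drift costs about $\frac{40}{21}\ep$. The two terms together already exceed $3\ep$.

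The missing observation is that the $i$-th block's own error enters only quadratically. The quantity $\mxit-\mu\mgit$ involves $\mxit$ only through the average $\frac1n\mz^\T\mxt$. The only other dependence is
$(\oi-\optitu)(\mxitu-\mqt)=\frac12\mxitu(\mxitu-\mqt)^\T(\mxitu-\mqt)$, by \eqref{eq:i - pt}.
Its norm is at most $\frac12\ucnorms{\mxitu-\mqt}\,\ucnormf{\mxitu-\mqt}\le\frac18\ucnormf{\mxitu-\mqt}$, which gives the coefficient $\frac17$ in \eqref{eq:i4}.

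Likewise, the average term $\frac1n\ucnormf{\mz^\T(\mxt-\mz\mqt)}$ must be bounded with Lemma~\ref{le:sum:normf}, which gives at most $4\ep^2+12\etf/\sqrtn$. Cauchy--Schwarz gives only $\ep$, and together with the drift that again overflows $3\ep$. Your term $\frac1n O(\ep\sqrtn)$ is not what either argument produces.

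With both refinements the total becomes $\frac37\ep+\frac{43}{21}\ep+O(\ebarf+c_0)\le 3\ep$. Alternatively, you could bound the drift more sharply by comparing $\op(\mz^\T\mxtt)$ directly with $\op(\mz^\T\mz\mqt)=\mqt$. Your sketch commits to neither route.

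\textbf{Singular values.} Hypothesis \eqref{induct:sigma} says only that $\ucnorms{\midd-(\mfit)^\T\mfit}<1$, so the singular values of $\mfit$ lie in $(0,\sqrt2)$, not near $1$. The factor $\frac{2}{\sigma_d+\sigma_d^{(l)}}$ from Lemma~\ref{le:op:stable} (paper's notation) needs quantitative lower bounds for both $\mfit$ and $\mfitl$. The paper gets $\sigma_d\ge\frac34$ from \eqref{eq:sigma fit} and $\sigma_d^{(l)}\ge\frac58$ from $\ucnormf{\mft-\mftl\mqtl}\le\frac18$, hence the factor $\frac32$.

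\textbf{Where the Step~1 contraction comes from.} The contraction in Step~1 does not come from linearizing $\op$; the retraction only costs the Lipschitz factor above. It happens before retraction, from two sources:
\begin{itemize}
\item The signal part $\frac1n\mz\mz^\T(\mxt-\mxtl\mqtl)$ is small only because $\mqtl$ is the optimal alignment, via Lemma~\ref{le:sum:normf}. The naive bound $\frac1n\ucnorms{\mz}^2$ gives factor $1$, i.e.\ no contraction.
\item There is a tangent-space mismatch term $\mu\ucnormf{\opttu(\mgtl\mqtl)-\opttl(\mgtl)\mqtl}$ with no analogue in Lemma~\ref{le:error}. It costs $\frac14\ucdf{\mxt,\mxtl}$, half of the $\frac12$ budget. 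Bounding it requires $\frac1n\ucnormf{\mgitl}\le\frac14$, which follows from \eqref{eq:1n git} together with $\frac1n\ucnormf{\mgt-\mgtl\mqtl}\le\frac18$.
\end{itemize}
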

\begin{proof}
	See Section \ref{pf:induct:xti & xtl}.
\end{proof}


Given Lemma \ref{le:induct:xti & xtl}, the inductive hypothesis \eqref{induct:inh} can be readily verified. Specifically, it follows from Lemma \ref{le:inh:bound} that \(\max_{1\le l\le n} \ucnormf{\mw_l^\T \mxttl} \leq \sqrtnd \xkh{\sqrtd + 10 \sqrt{\log n}}\) with probability exceeding \(1 - n^{-45}\) 
due to the independence between \(\mxttl\) and \(\mwl\). 
Then with probability at least \(1 - \exp(-nd/2) - O(n^{-40})\), one can obtain 
\begin{align*}
	\max_{1\le l\le n} \ucnormf{\mwl^\T \mxtt} \stackrel{\mbox{(i)}}{\le}& \max_{1\le l\le n} \ucnormf{\mwl^\T \mxttl} + \max_{1\le l\le n} \ucnormf{\mwl^\T \axkh{\mxtt - \mxttl \mqtt}} \\
	\stackrel{\mbox{(ii)}}{\le}& \max_{1\le l\le n} \ucnormf{\mwl^\T \mxttl} + \ucnorms{\mwl} \cdot \max_{1\le l\le n} \ucdf{\mxtt, \mxttl} \\
	\stackrel{\mbox{(iii)}}{\le}& \sqrtnd \xkh{\sqrtd + 10 \sqrt{\log n}} + 3 \sqrtnd \cdot 2 \ep \\
	\stackrel{\mbox{(iv)}}{\le}& 2 \sqrtnd \xkh{\sqrtd + 10 \sqrt{\log n}}. 
\end{align*}
Here, (i) comes from the triangle inequality. And (ii) utilizes Cauchy-Schwartz and \(\mqtt = \op \xkh{(\mxttl)^\T \mxtt}\). Besides, (iii) uses Lemma \ref{le:matrix:bound} and \ref{le:inh:bound}, and (iv) holds true as long as \(\ep < 1/6\). 

\subsection{Spectral initialization}
As is established in Algorithm \ref{alg:2} and Algorithm \ref{alg:leave one out}, one has the initialization \(\mx^0 = \opn (\my) \in \R^{nd \times d}\) and \(\mx^{0,(l)} = \opn (\my^{0,(l)}) \in \R^{nd \times d}\). The validity of this stage is guaranteed by Lemma \ref{le:initial} below.

\begin{lemma} \label{le:initial}
	For any $0<\delta\le 1$, with probability at least $1 - m\exp (-c_0 n) - O(m^{-10})$, we have
	\begin{align*}
		\ucdf{\mx^0,\mz} \le \delta \sqrtn, \\
		\max_{1\le i\le n}\normf{\mx_i^0 - \mq^0} \le 3 \delta, \\
		\maxm{1\le l\le m} \ucdf{\mx^0, \mx^{0,\lkh}} \le \delta, 
	\end{align*}
provided $\sigma \le c_0 \delta \sqrt{nd^{-1}}/(\sqrtd + \sqrtln)$ for some sufficiently small constant $c_0>0$.
\end{lemma}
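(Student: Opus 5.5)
The plan is to treat the spectral estimator by standard eigenvector perturbation and then transfer the bounds through the rounding map $\opn$. Two facts about the noise are used throughout: the Gaussian bound $\ucnorms{\mw}\le 3\sqrtnd$ (Lemma \ref{le:matrix:bound}), which holds with probability $1-\exp(-nd/2)$, and the incoherence bound of Lemma \ref{le:inh:bound} applied to quantities independent of $\mw_l$. Write $\ma=\mz\mz^\T+\sigma\mw$. The matrix $\mz\mz^\T$ has top eigenvalue $n$ with multiplicity $d$, eigenvector block $\mz/\sqrtn$, and all other eigenvalues zero. Under the condition on $\sigma$,
\[\sigma\ucnorms{\mw}\le 3c_0\delta\, n/(\sqrtd+\sqrtln)\ll n .\]
Davis--Kahan then gives $\min_{\mq}\ucnormf{\sqrtn\my-\mz\mq}\lesssim \sqrtd\,\sigma\ucnorms{\mw}/\sqrtn\lesssim c_0\delta\sqrtn$. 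I would use the Frobenius form, with a $\sqrtd$ factor from rank $d$, which is absorbed by the $\sqrtd$ in the denominator of the $\sigma$ condition.

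Next, rounding. Set $\muu=\sqrtn\my\mq^\T$. For each block, $\ucnormf{\op(\muu_i)-\midd}\le 2\ucnormf{\muu_i-\midd}$, since $\op$ is the nearest orthogonal matrix and therefore $\ucnormf{\op(\muu_i)-\muu_i}\le\ucnormf{\muu_i-\midd}$. Summing over $i$ gives $\ucdf{\mx^0,\mz}\le 2\ucdf{\sqrtn\my,\mz}\le\delta\sqrtn$ once $c_0$ is small. The leave-one-out bound is handled the same way. Write $\ma-\ma^{(l)}=\sigma\Delta$, where $\Delta$ is supported on the $l$-th block row and column. Davis--Kahan with the eigengap $\approx n$ bounds $\ucdf{\my,\my^{(l)}}\lesssim \sigma\ucnormf{\Delta\my^{(l)}}/n$. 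Since $\my^{(l)}$ is independent of $\mw_l$, we have $\ucnormf{\Delta\my^{(l)}}\le\ucnormf{\mw_l^\T\my^{(l)}}+\ucnorms{\mw_l}\ucnormf{\my^{(l)}_l}$. Lemma \ref{le:inh:bound} bounds the first term by $\sqrtd(\sqrtd+10\sqrtln)$ times $\ucnorms{\my^{(l)}}=1$. For the second term, $\ucnorms{\mw_l}\lesssim\sqrtnd$ and $\ucnormf{\my^{(l)}_l}\lesssim\sqrt{d/n}$, because block $l$ of $\ma^{(l)}$ is noiseless, so $\my^{(l)}_l=(\mz^\T\my^{(l)})/\lambda_{\cdot}$ up to scaling. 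Together this gives $\sqrtn\,\ucdf{\my,\my^{(l)}}\lesssim c_0\delta$. Rounding is Lipschitz near $\odgroup$ provided every block of $\sqrtn\my$ has $\sigma_{\min}\ge1/2$, so the same bound passes to $\mx^0,\mx^{0,(l)}$.

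The main obstacle is the entrywise bound $\max_i\ucnormf{\mx^0_i-\mq^0}\le3\delta$, because the Frobenius estimate only controls averages over blocks. To get it, I would write $\sqrtn\my_i=\sqrtn(\ma\my)_i\Lambda^{-1}$, where $\Lambda$ is the diagonal matrix of top eigenvalues, $\Lambda\approx n\midd$. This gives
\[\sqrtn\my_i=\sqrtn\,\mz^\T\my\,\Lambda^{-1}+\sigma\sqrtn\,\mw_i^\T\my\,\Lambda^{-1}.\]
The first term is a common matrix close to an orthogonal $\mq$. For the second term, replace $\my$ by $\my^{(i)}$, which is independent of $\mw_i$. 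Lemma \ref{le:inh:bound} gives $\ucnormf{\mw_i^\T\my^{(i)}}\lesssim\sqrtd(\sqrtd+\sqrtln)$, and the difference is at most $\ucnorms{\mw_i}\ucdf{\my,\my^{(i)}}$, which was bounded above. Multiplying by $\sigma\sqrtn/n$ yields $O(c_0\delta)$ uniformly in $i$. Rounding again, comparing $\mq^0=\op(\mz^\T\mx^0)$ with $\mq$ via the global bound, and taking a union bound over $i,l\le m=n$ gives the three claims with the stated probability.
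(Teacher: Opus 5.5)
Your proposal is correct and follows essentially the route the paper defers to, namely the spectral-initialization argument of Ling (2022). That route combines Davis--Kahan for $\my$ against $\mz/\sqrtn$ and against the leave-one-out $\my^{(l)}$, the independence of $\my^{(l)}$ from $\mw_l$, the eigen-equation $\my_i=(\ma\my)_i\Lambda^{-1}$ for the blockwise bound, and stability of $\op$ for the rounding step. The one point to make explicit is that Lemma \ref{le:inh:bound} is stated only for $\mx\in\odn$, so applying it to $\my^{(l)}$ needs the immediate extension $\ucnormf{\mw_l^\T\mx}\le\ucnorms{\mx}\,(d+\gamma\sqrtln)$ for any fixed $\mx$ independent of $\mw_l$; this holds because the rows of $\mw_l^\T\mx$ are i.i.d.\ Gaussian with covariance dominated by $\mx^\T\mx$.
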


Note that \(\mz = (\mz_1; \cdots; \mz_n) \in \R^{nd \times d}\) with \(\mz_i \in \odgroup\). Then the observation can be viewed as \(\ma = \mz \mz^\T - \mi_{nd} + \sigma \mw\). 
Treating $\sigma \mw$ as a perturbation, the analysis for initialization centers on the dominant term $\mz \mz^\T - \mi_{nd}$, which shares the same eigenvectors as $\mz \mz^\T$. 
Subsequent analysis builds upon matrix perturbation theory and the Davis-Kahan theorem \cite{davis1970, lingshuyang2022improved}, leveraging the independence facilitated by the leave-one-out argument. 
Since the proof of Lemma \ref{le:initial} is similar to the spectral initialization phase in \cite{lingshuyang2022improved}, we omit it here.

\section{Discussion}\label{sec:discussion}
We address the nonconvex challenge of orthogonal group synchronization by proposing a hardware-efficient, Newton-Schulz based Riemannian gradient descent framework. Our method replaces computationally expensive SVD retractions with iterative updates that are highly amenable to parallelization on GPUs and TPUs. Theoretically, we employ a leave-one-out analysis to prove linear convergence to the ground truth under near-optimal noise regimes, despite the use of inexact projections. Empirically, our algorithm yields nearly 2$\times$ acceleration in both synthetic benchmarks and real-world 3D registration, effectively bridging the gap between theoretical optimality and hardware-level efficiency. 

Several intriguing issues require investigation in future research. 
First, generalizing the Newton-Schulz based inexact retraction framework to other matrix manifolds, such as the Stiefel manifold or the special Euclidean group \(\sedgroup\), is of substantial practical importance. Such extensions would directly enhance the efficiency of large-scale point cloud registration and pose-graph optimization in simultaneous localization and mapping (SLAM) problem, where computational bottlenecks remain a persistent challenge. 
Second, our highly parallelizable retraction schemes could be adapted to accommodate robust loss functions, such as the \(\ell_1\)-norm or truncated least squares, to effectively mitigate the influence of outliers and heavy-tailed noise.

\section{Technical lemmas}
\begin{lemma}\cite[Lemma 4.6]{lingshuyang2022improved}\label{le:op:stable}
	For two invertible matrices \(\mx, \my \in \R^{d \times d}\), it holds 
	\[||| \op (\mx) - \op (\my) ||| \le \frac{2 ||| \mx - \my |||}{\sigma_{\min} (\mx) + \sigma_{\min} (\mx)}, \]
	where either \(||| \cdot ||| = \ucnorms{\cdot}\) or \(||| \cdot ||| = \ucnormf{\cdot}\). 
\end{lemma}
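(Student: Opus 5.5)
The plan is to reduce the claim to a Sylvester equation through the polar decomposition. I read the denominator as $\sigma_{\min}(\mx)+\sigma_{\min}(\my)$; the repeated $\sigma_{\min}(\mx)$ in the statement looks like a typo.

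\emph{Step 1 (polar form).} Since $\mx$ and $\my$ are invertible, write $\mx = \muu \bm P$ and $\my = \mv \bm{Q}_0$. Here $\muu = \op(\mx)$ and $\mv = \op(\my)$ are orthogonal, and $\bm P = (\mx^\T\mx)^{1/2}$, $\bm Q_0 = (\my^\T\my)^{1/2}$ are symmetric positive definite. Their smallest eigenvalues are $a = \sigma_{\min}(\mx)$ and $b = \sigma_{\min}(\my)$. Set $\bm M = \mv^\T\muu \in \odgroup$ and $\bm E = \mx - \my$. Because both norms are unitarily invariant, $||| \op(\mx) - \op(\my) ||| = |||\bm M - \midd|||$, so it suffices to bound $\bm Z := \bm M - \midd$.

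\emph{Step 2 (the key identity).} Compute
\[
\mv^\T \bm E = \bm M \bm P - \bm Q_0, \qquad \bm E^\T \muu = \bm P - \bm Q_0 \bm M,
\]
where the second uses $\muu^\T\muu = \midd$ and the symmetry of $\bm P$. Subtracting gives the Sylvester equation
\[
\bm Z \bm P + \bm Q_0 \bm Z = \mv^\T \bm E - \bm E^\T \muu =: \bm C .
\]
By the triangle inequality and unitary invariance, $|||\bm C||| \le 2|||\bm E|||$.

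\emph{Step 3 (solving the Sylvester equation).} It remains to show $|||\bm Z||| \le |||\bm C|||/(a+b)$ for both norms. Since $\bm P$ and $\bm Q_0$ are positive definite, the unique solution is
\[
\bm Z = \int_0^\infty e^{-s \bm Q_0}\, \bm C\, e^{-s \bm P}\, ds .
\]
This can be checked by differentiating $e^{-s\bm Q_0}\bm C e^{-s\bm P}$ in $s$ and integrating from $0$ to $\infty$. For either norm,
\[
|||\bm Z||| \le \int_0^\infty \ucnorms{e^{-s\bm Q_0}}\,|||\bm C|||\,\ucnorms{e^{-s\bm P}}\, ds \le |||\bm C||| \int_0^\infty e^{-s(a+b)}\,ds = \frac{|||\bm C|||}{a+b}.
\]
The middle inequality uses $|||\bm A\bm B\bm D||| \le \ucnorms{\bm A}\,|||\bm B|||\,\ucnorms{\bm D}$, which holds for both the operator and Frobenius norms. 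For the Frobenius case there is also a direct check: diagonalize $\bm P$ and $\bm Q_0$ orthogonally, so that entrywise $z_{ij}(p_j + q_i) = c_{ij}$ with $p_j + q_i \ge a+b$. Combining with Step 2 gives the claimed bound.

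The main obstacle is finding the identity in Step 2. The naive route bounds $\muu - \mv$ through $\bm P^{-1}$ and only yields a denominator of $\sigma_{\min}(\mx)$ alone, with a worse constant. The symmetrized Sylvester form is what produces the sum $\sigma_{\min}(\mx)+\sigma_{\min}(\my)$. Once that identity is in hand, Step 3 is routine.
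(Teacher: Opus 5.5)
Your proof is correct. The paper gives no proof of its own; it imports \cite[Lemma 4.6]{lingshuyang2022improved}, which rests on R.-C. Li's perturbation bound for the orthogonal polar factor, and your argument is essentially the standard proof of that bound: the identity $\mv^\T\bm E-\bm E^\T\muu=(\bm M-\midd)\bm P+\bm Q_0(\bm M-\midd)$ followed by the Sylvester estimate $|||\bm Z|||\le|||\bm Q_0\bm Z+\bm Z\bm P|||/(a+b)$. Your integral representation in fact gives that estimate for every unitarily invariant norm.

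Reading the denominator as $\sigma_{\min}(\mx)+\sigma_{\min}(\my)$ is necessary, not cosmetic. For $d=1$, $\mx=1$, $\my=-0.1$, the left side is $2$ while the literal right side is $1.1$. The corrected bound gives exactly $2$, so it is sharp, and it is the form the paper actually uses later (e.g.\ with $\sigma_{\min}(\mfit)+\sigma_{\min}(\mqt)$).
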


As indicated by Lemma \ref{le:op:stable}, the matrix sign function $\op$ is stable yet sensitive to singular values.

\begin{lemma}\cite[Appendix 1]{bandeira2017tightness}\label{le:matrix:bound}
	Let \(\mw\in \R^{nd \times nd}\) be a symmetric Gaussian random matrix, then 
	\[\ucnorms{\mw} \leq 3\sqrt{nd}\]
	with probability at least \(1 - \exp( - nd/2)\).
\end{lemma}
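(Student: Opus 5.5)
The plan is to write $\mw$ as a Lipschitz function of the independent Gaussian variables in its strict upper block triangle, bound $\mathbb{E}\ucnorms{\mw}$ by a Gaussian comparison, and then apply Gaussian concentration. The constant in the concentration step is where I expect the argument to be tight, and possibly to fail, as explained in the third paragraph.

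\emph{Parametrization and Lipschitz constant.} Let $N = nd$ and index the free entries by $g = (g_{ab})$, where $a<b$ range over row/column indices of $\R^{N\times N}$ lying in different $d\times d$ blocks. The diagonal blocks are zero, so $\mw = \sum_{a<b} g_{ab}(e_ae_b^\T + e_be_a^\T)$ with $g$ standard Gaussian in $\R^{M}$, $M \le N^2/2$. Both maps $g \mapsto \lambda_{\max}(\mw)$ and $g\mapsto -\lambda_{\min}(\mw)$ are convex. Since $\ucnormf{\mw} = \sqrt{2}\,\|g\|_2$, they are $\sqrt2$-Lipschitz. A sharper bound comes from the gradient at the top eigenvector $u$: it equals $(2u_au_b)_{a<b}$, whose squared norm is $2\big(1-\sum_i \ucnorms{u_i}_2^4\big)\le 2$. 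So the Lipschitz constant is at most $\sqrt2$, and is slightly better for block-localized $u$. Gaussian concentration then gives $\PP\big(\lambda_{\max}(\mw)\ge \mathbb{E}\lambda_{\max}(\mw)+s\big)\le e^{-s^2/4}$, and the same for $-\lambda_{\min}$.

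\emph{Expectation.} Compare $\mw$ with a GOE-type matrix $\mathbf{G}$ that has unit-variance off-diagonal entries and variance-$2$ diagonal entries, in the Sudakov--Fernique sense. Because $\mw$ only zeroes out entries, we have $\mathbb{E}(u^\T\mw u - v^\T \mw v)^2 \le \mathbb{E}(u^\T\mathbf{G}u-v^\T\mathbf{G}v)^2$. Hence $\mathbb{E}\lambda_{\max}(\mw)\le \mathbb{E}\lambda_{\max}(\mathbf G)\le 2\sqrt N$, and by symmetry the same bound holds for $-\lambda_{\min}$. Alternatively, one can use the Bandeira--van Handel bound $\mathbb{E}\ucnorms{\mw}\le 2\sqrt{N}+C\sqrt{\log N}$.

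\emph{Tail and main obstacle.} Combining the two steps with a union over the two sides gives $\PP(\ucnorms{\mw}\ge 2\sqrt N + s)\le 2e^{-s^2/4}$. Taking $s=\sqrt N$ yields exactly $\ucnorms{\mw}\le 3\sqrt{nd}$, but only with failure probability $2e^{-nd/4}$. To reach the stated $e^{-nd/2}$, I would replace the sub-Gaussian tail by the GOE large-deviation rate for $\lambda_{\max}$ at level $3\sqrt N$, obtained via the comparison above. This step is the crux, and I am not confident it closes: that rate is $\frac14\int_2^3\sqrt{y^2-4}\,dy \approx 0.36$ per unit $N$, which is below $1/2$. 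So the exponent $nd/2$ with constant $3$ may require the normalization or variance convention of \cite[Appendix 1]{bandeira2017tightness}. If it does not close, the honest fallback is the bound with exponent $nd/4$, or a constant near $2+\sqrt2$. Either version suffices for every downstream use in the paper, since only $\ucnorms{\mw}\lesssim\sqrt{nd}$ with exponentially small failure probability is needed.
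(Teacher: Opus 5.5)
Everything you actually establish is correct. The Lipschitz constant $\sqrt2$, the Sudakov--Fernique bound $\mathbb{E}\lambda_{\max}(\mw)\le 2\sqrtnd$ and Gaussian concentration together give $\PP(\ucnorms{\mw}> 3\sqrtnd)\le 2e^{-nd/4}$. That is weaker than the lemma, though. The missing step is exactly the one that would raise the exponent from $nd/4$ to $nd/2$ at level $3\sqrtnd$, and nothing in your plan supplies it.

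Your diagnosis of the discrepancy is right. The paper gives no argument beyond the citation, and the cited work treats angular synchronization, i.e.\ complex Hermitian noise with $\mathcal{CN}(0,1)$ entries. There $\|\cdot\|_F$ equals the Euclidean norm of the underlying real Gaussian vector, so the operator norm is $1$-Lipschitz, and your two-step argument gives $e^{-t^2/2}$, hence $e^{-n/2}$ at $t=\sqrt n$. In the real symmetric block model each free variable appears twice, the constant becomes $\sqrt2$, and plain concentration loses exactly a factor $2$ in the exponent.

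The repair you sketch does not close the gap, for two separate reasons. First, the rate is miscomputed. For GOE with unit off-diagonal variance (eigenvalue density $\propto\prod_{i<j}|\lambda_i-\lambda_j|\,e^{-\sum_i\lambda_i^2/4}$), the speed-$N$ rate of $\lambda_{\max}/\sqrt N$ at $x>2$ is $\frac12\int_2^x\sqrt{y^2-4}\,dy$, not $\frac14\int_2^x\sqrt{y^2-4}\,dy$. At $x=3$ this is about $0.71>1/2$. Your version grows like $x^2/8$, which is below the rate $(x-2)^2/4$ that the same concentration argument already guarantees for large $x$, so it cannot be right. Hence the stated exponent is not refuted; for large $nd$ it holds with room to spare.

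Second, even with the correct rate this is not a proof. A large-deviation principle is a limit statement and gives no inequality for fixed $n,d$. Moreover, Sudakov--Fernique compares expected suprema, not tail probabilities, so GOE tails do not transfer to $\mw$ ``via the comparison above''. Tails can be transferred by coupling instead. Write $G=\mw+D$, with $D$ the independent block-diagonal part of a GOE matrix, and let $u$ be the top eigenvector of $\mw$. Then $\lambda_{\max}(G)\ge\lambda_{\max}(\mw)+u^\T Du$, where $u^\T Du$ is conditionally symmetric, so $\PP(\lambda_{\max}(\mw)\ge t)\le 2\PP(\lambda_{\max}(G)\ge t)$. You would still need an explicit non-asymptotic bound $\PP(\lambda_{\max}(G)\ge 3\sqrtnd)\le\frac14e^{-nd/2}$, and the plan does not provide one.

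Your fallbacks are rigorous: $1-2e^{-nd/4}$ at level $3\sqrtnd$, or level $(2+\sqrt2)\sqrtnd+2$ with probability $1-e^{-nd/2}$. Either suffices for every later use. But they are different statements, and the probabilities in Lemma~\ref{le:error} and Theorem~\ref{th:main} would have to be adjusted to match.
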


\begin{lemma}\cite[Lemma 4.11]{lingshuyang2022improved}\label{le:inh:bound}
	Let \(\mw = \xkh{\mw_1, \mw_2, \cdots, \mw_n}\in \R^{nd \times nd}\) be a random matrix defined in \eqref{def:measurements}. Then for any fixed \(\mx \in \odn\), it holds
	\[\max_{1\le l\le n} \ucnormf{\mw_l^\T \mx} \leq \sqrtnd \xkh{\sqrtd + \gamma \sqrt{\log n}}\]
	with probability at least \(1 - n^{-\gamma^2/2 + 1}\). Specially, take \(\gamma = 10\) to obtain \(\max_{1\le l\le n} \ucnormf{\mw_l^\T \mx} \leq \sqrtnd \xkh{\sqrtd + 10 \sqrt{\log n}}\) with probability exceeding \(1 - n^{-45}\). 
\end{lemma}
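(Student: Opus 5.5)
The plan is to use the block structure of \(\mw\) and the fact that \(\mx\) is fixed, so each \(\mw_l^\T \mx\) turns out to be a \(d\times d\) Gaussian matrix with i.i.d.\ entries. After that, the bound follows from Gaussian concentration of the Frobenius norm and a union bound over \(l\in[n]\).

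\textbf{Step 1 (distribution of \(\mw_l^\T \mx\)).} Write \(\mw_l\in\R^{nd\times d}\) for the \(l\)-th block column, with blocks \(\mw_{jl}\) and \(\mw_{ll}=\mzero\). Then
\[
\mw_l^\T \mx=\sum_{j\ne l}\mw_{jl}^\T \mx_j .
\]
For a fixed orthogonal \(\mx_j\), the matrix \(\mw_{jl}^\T\mx_j\) again has i.i.d.\ \(N(0,1)\) entries, by rotation invariance of the Gaussian (each row of \(\mw_{jl}^\T\) is standard Gaussian in \(\R^d\)). For fixed \(l\), the blocks \(\{\mw_{jl}\}_{j\ne l}\) are independent: symmetry \(\mw_{jl}=\mw_{lj}^\T\) only couples \((j,l)\) with \((l,j)\), and both of those do not appear in the same column. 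Hence \(\mw_l^\T\mx\) is a sum of \(n-1\) independent standard Gaussian matrices. Its entries are therefore i.i.d.\ \(N(0,n-1)\), i.e.\ \(\mw_l^\T\mx \overset{d}{=}\sqrt{n-1}\,\mg\) with \(\mg\) a standard \(d\times d\) Gaussian matrix.

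\textbf{Step 2 (concentration).} The map \(\mg\mapsto\ucnormf{\mg}\) is \(1\)-Lipschitz, and \(\mathbb E\ucnormf{\mg}\le(\mathbb E\ucnormf{\mg}^2)^{1/2}=d\). Gaussian concentration gives
\[
\PP\bigl(\ucnormf{\mg}\ge d+s\bigr)\le e^{-s^2/2}.
\]
Taking \(s=\gamma\sqrt{\log n}\), we get \(\ucnormf{\mw_l^\T\mx}\le\sqrt{n}\,(d+\gamma\sqrt{\log n})\) with probability at least \(1-n^{-\gamma^2/2}\). Since \(d\ge1\),
\[
\sqrt n\,(d+\gamma\sqrt{\log n})\le\sqrt n\,(d+\gamma\sqrt{d\log n})=\sqrtnd\,(\sqrtd+\gamma\sqrtln),
\]
which is the claimed bound for a single \(l\).

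\textbf{Step 3 (union bound).} A union bound over \(l=1,\dots,n\) gives failure probability at most \(n\cdot n^{-\gamma^2/2}=n^{-\gamma^2/2+1}\). Setting \(\gamma=10\) yields the specialization with probability exceeding \(1-n^{-49}\ge 1-n^{-45}\).

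The only delicate point is the independence and distribution claim in Step 1. One has to confirm that the symmetry of \(\mw\) does not create dependence within a single block column, and that \(\mx\) is deterministic. This is exactly why the lemma is stated for a fixed \(\mx\), and why the paper applies it to the leave-one-out iterate \(\mx^{t+1,(l)}\), which is independent of \(\mw_l\), rather than to \(\mxtt\) itself.
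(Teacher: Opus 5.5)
Your proof is correct. The paper gives no proof of this lemma and simply cites Ling's Lemma 4.11, and your argument is the standard one for it: the entries of $\mw_l^\T\mx$ are i.i.d.\ $N(0,n-1)$, the Frobenius norm is a $1$-Lipschitz function of a standard Gaussian matrix, and a union bound over $l$ finishes. It actually gives the slightly sharper bound $\sqrt{n}\,(d+\gamma\sqrt{\log n})$ with failure probability $n^{-49}$ at $\gamma=10$. Because you prove the bound for each $l$ separately before taking the union, it also covers what the paper actually needs: there the lemma is applied to the $l$-dependent iterate $\mx^{t+1,(l)}$, which is independent of $\mw_l$.
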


\begin{lemma}\label{le:sigma}
	For any \(\mx, \my \in \odn\) satisfied \(\ucdf{\mx, \my} \le \ep \sqrt{n}\), it holds 
	\[\xkh{1 - \frac{\ep^2}{2}}n \le \sigma_{\min} (\my^\T \mx) \le \sigma_{\max} (\my^\T \mx) \le n. \]
\end{lemma}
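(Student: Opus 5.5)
Let \(\mq \in \odgroup\) be a minimizer in the definition of \(\ucdf{\mx, \my}\), so that \(\ucnormf{\mx - \my \mq} \le \ep \sqrtn\). The upper bound is immediate. Since \(\my^\T \mx = \sum_{i=1}^n \my_i^\T \mx_i\) is a sum of \(n\) orthogonal matrices, the triangle inequality gives \(\sigma_{\max}(\my^\T \mx) = \ucnorms{\my^\T \mx} \le \sum_i \ucnorms{\my_i^\T \mx_i} = n\).

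For the lower bound, I would first reduce to the aligned matrix. Singular values are invariant under right multiplication by an orthogonal matrix, so \(\sigma_{\min}(\my^\T \mx) = \sigma_{\min}(\my^\T \mx \mq^\T)\). The key identity comes from \(\mx_i, \my_i \in \odgroup\) and \(\mq \in \odgroup\). It reads
\[
\my^\T \mx \mq^\T = \sum_{i=1}^n \my_i^\T \mx_i \mq^\T = n \midd - \frac 12 \sum_{i=1}^n (\mx_i\mq^\T - \my_i)^\T (\mx_i \mq^\T - \my_i) + \frac 12 \sum_{i=1}^n \xkh{\my_i^\T \mx_i \mq^\T - \mq \mx_i^\T \my_i}.
\]
To check it, expand \((\mx_i\mq^\T - \my_i)^\T(\mx_i\mq^\T - \my_i) = 2\midd - \mq\mx_i^\T\my_i - \my_i^\T\mx_i\mq^\T\). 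So the symmetric part of \(\my^\T\mx\mq^\T\) equals \(n\midd - \frac12 \mathbf{E}\), where \(\mathbf{E} := \sum_i (\mx_i\mq^\T - \my_i)^\T(\mx_i\mq^\T - \my_i)\) is positive semidefinite with \(\tr(\mathbf{E}) = \ucnormf{\mx\mq^\T - \my}^2 = \ucnormf{\mx - \my\mq}^2 \le \ep^2 n\).

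The remaining difficulty is the skew part, and this is where optimality of \(\mq\) enters. A minimizer of \(\ucnormf{\mx - \my\mq}\) over \(\odgroup\) maximizes \(\tr(\mq^\T \my^\T \mx)\), so \(\mq = \op(\my^\T\mx)\) in the sense of a polar factor. Writing \(\my^\T\mx = \muu{\bm\Sigma}\mv^\T\) with \(\mq = \muu\mv^\T\), we get \(\my^\T \mx \mq^\T = \muu{\bm\Sigma}\muu^\T\), which is symmetric positive semidefinite. The skew part therefore vanishes, and
\[
\my^\T \mx \mq^\T = n\midd - \tfrac12 \mathbf{E}.
\]

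Hence, using \(\ucnorms{\mathbf{E}} \le \tr(\mathbf{E})\),
\[
\sigma_{\min}(\my^\T\mx) = \lambda_{\min}\xkh{n\midd - \tfrac12\mathbf{E}} \ge n - \tfrac12\ucnorms{\mathbf{E}} \ge n - \tfrac12 \ep^2 n = \xkh{1 - \tfrac{\ep^2}{2}} n.
\]
The only subtle step I expect is justifying that the optimal \(\mq\) makes \(\my^\T\mx\mq^\T\) symmetric. This is the standard orthogonal Procrustes fact, and it holds even when \(\my^\T\mx\) is singular, since any polar factor works.
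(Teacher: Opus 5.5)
Your proof is correct and is essentially the paper's argument written at the matrix level rather than the trace level. Taking the trace of your identity $\my^\T\mx\mq^\T = n\midd - \frac12\mathbf{E}$ gives the paper's identity $\ucdf{\mx,\my}^2 = 2nd - 2\ucnorms{\my^\T\mx}_*$, and because $\mathbf{E}$ has eigenvalues $2(n-\sigma_i(\my^\T\mx))$, your step $\ucnorms{\mathbf{E}}\le\tr(\mathbf{E})$ for positive semidefinite $\mathbf{E}$ is exactly the paper's step $n-\sigma_{\min}\le\sum_{i=1}^d (n-\sigma_i)$, which uses $\sigma_i\le n$.
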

\begin{proof}
	Noting that \(\ucnorms{\my^\T \mx} \le n\) by Cauchy-Schwartz, it holds \(0 \le \sigma_i (\my^\T \mx) \le n\) for \(i \in [n]\). 
	Recall that the distance is defined as \(\ucdf{\mx, \my} := \min_{\mq \in \odgroup} \ucnormf{\mx - \my\mq}\). Let \(\mq^* = \op(\my^\T \mx)\) be the optimal orthogonal matrix that achieves this minimum. By expanding this norm, we obtain 
	\begin{align*}
		0 \le \ucnormf{\mx - \my\mq^*}^2 &= \tr\xkh{\mx^\T \mx} + \tr\xkh{\my^\T \my} - 2\tr\xkh{\mq^{*\T}\my^\T\mx}\\
		&= 2nd - 2\ucnorms{\my^\T \mx}_*. 
	\end{align*}
	As a result, the fact \(\ucdf{\mx, \my} \le \ep \sqrt{n}\) indicates 
	\[n - \sigma_{\min}(\my^\T \mx) \le \sum_{i=1}^d \xkh{n - \sigma_i(\my^\T \mx)} \le \frac 12 \ep^2 n, \]
	which gives \(\sigma_{\min}(\my^\T \mx) \ge \axkh{1 - \frac{\ep^2}{2}}n\).
\end{proof}

\begin{lemma}\cite{lingshuyang2022improved} 
	Denote that \(\mathcal{N}_{\ep} := \dkh{\ms \in \R^{nd \times d}: \df{\ms, \mz} \le \ep \sqrtn}\) for the ground truth \(\mz \in \odn\). 
	Set \(\mx, \my \in \mathcal{N}_{\ep}\) with \(\mx, \my \in \odn\), \(\mq = \op(\my^\T \mx)\), and \(\ep \le 1/\sqrttwo\). Then one has 
	\[\ucnormf{\mz^\T \xkh{\mx - \my \mq}} \le 2 \ep \sqrtn \cdot \df{\mx, \my}. \]
\end{lemma}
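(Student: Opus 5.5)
The plan is to insert the best rotation aligning $\my$ with $\mz$, and split $\mz^\T(\mx - \my\mq)$ into a "misalignment" part and a "tangential" part. Let $\mr := \op(\my^\T \mz)$, so that $\ucnormf{\mz - \my\mr} = \df{\my,\mz} \le \ep\sqrtn$. Using $\mz^\T = (\mz - \my\mr)^\T + \mr^\T\my^\T$, we get
\[
\mz^\T(\mx - \my\mq) = (\mz - \my\mr)^\T(\mx - \my\mq) + \mr^\T\, \my^\T(\mx - \my\mq).
\]
Since $\mq = \op(\my^\T\mx)$ attains the minimum in $\df{\mx,\my}$, we have $\ucnormf{\mx - \my\mq} = \df{\mx,\my}$ throughout.

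For the first term, use $\ucnormf{\ma\mathbf{B}} \le \ucnorms{\ma}\ucnormf{\mathbf{B}} \le \ucnormf{\ma}\ucnormf{\mathbf{B}}$. This gives a bound of $\ep\sqrtn\cdot\df{\mx,\my}$.

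For the second term, the rotation $\mr^\T$ drops out of the Frobenius norm, and so does right multiplication by $\mq^\T$. Because $\my\in\odn$ we have $\my^\T\my = n\midd$, hence
\[
\ucnormf{\my^\T(\mx-\my\mq)} = \ucnormf{\my^\T\mx\mq^\T - n\midd}.
\]
Write the SVD $\my^\T\mx = \muu{\bm\Sigma}\mv^\T$. Then $\mq = \muu\mv^\T$, and $\my^\T\mx\mq^\T = \muu{\bm\Sigma}\muu^\T$ is the symmetric PSD polar factor. The quantity above therefore equals $\big(\sum_{i}(n-\sigma_i)^2\big)^{1/2}$ with $\sigma_i = \sigma_i(\my^\T\mx)\in[0,n]$.

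Exactly as in the proof of Lemma \ref{le:sigma}, expanding the square gives
\[
\df{\mx,\my}^2 = 2nd - 2\ucnorms{\my^\T\mx}_* = 2\sum_i (n-\sigma_i).
\]
Hence
\[
\sum_i (n-\sigma_i)^2 \le \Big(\max_i (n-\sigma_i)\Big)\sum_i (n-\sigma_i) = \Big(\max_i (n-\sigma_i)\Big)\,\tfrac12\,\df{\mx,\my}^2 .
\]
It then suffices to show $\max_i (n-\sigma_i) \le 2\ep^2 n$. Since both $\mx,\my\in\mathcal N_\ep$, the triangle inequality gives $\df{\mx,\my}\le 2\ep\sqrtn$. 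Lemma \ref{le:sigma} applied with $2\ep$ in place of $\ep$ then yields $n - \sigma_{\min}(\my^\T\mx) \le 2\ep^2 n$. The triangle step uses $\df{\mx,\my} \le \ucnormf{\mx - \mz\mq_1} + \ucnormf{\mz\mq_1 - \my\mq_2}$ with optimal $\mq_1,\mq_2$, together with the orthogonal invariance of $\ucnormf{\cdot}$.

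Combining, the second term is at most $\sqrt{\ep^2 n}\,\df{\mx,\my} = \ep\sqrtn\,\df{\mx,\my}$. Adding the first term gives the claimed bound $2\ep\sqrtn\cdot\df{\mx,\my}$.

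The hypothesis $\ep\le 1/\sqrttwo$ enters here: it ensures $\sigma_{\min}(\my^\T\mx)\ge(1-2\ep^2)n\ge 0$ is a meaningful bound, and it places us in the regime where $\my^\T\mx$ is invertible so that $\op$ is well defined. The only delicate step is the tangential term. The key observation is that choosing $\mq=\op(\my^\T\mx)$ makes $\my^\T\mx\mq^\T$ symmetric PSD, which turns the estimate into a scalar inequality on singular values; the rest is Cauchy--Schwarz and the triangle inequality.
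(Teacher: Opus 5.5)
Your proof is correct and is essentially the standard argument behind this lemma; the paper gives no proof of its own and imports it from \cite{lingshuyang2022improved}. Writing $\mz^\T = (\mz-\my\mr)^\T + \mr^\T\my^\T$, the misalignment term is at most $\ep\sqrtn\,\df{\mx,\my}$, and the remaining term equals $\ucnormf{\my^\T\mx\mq^\T - n\midd}$, which you control through the singular values of $\my^\T\mx$ and the identity $\df{\mx,\my}^2 = 2\sum_i (n-\sigma_i)$. Your closing remark is slightly off, though harmlessly so: at $\ep = 1/\sqrttwo$, Lemma~\ref{le:sigma} only gives $\sigma_{\min}(\my^\T\mx)\ge 0$, not invertibility, but no step of your argument needs invertibility, since each step holds for any polar factor $\mq = \muu\mv^\T$ (and indeed for every $\ep$).
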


\begin{lemma}\label{le:sum:normf}
	Suppose \(\mx, \my \in \R^{nd\times d}\), \(\mxu = \opn(\mx), \myu = \opn(\my)\) satisfy 
	\(\mx, \my , \mxu, \myu \in \mathcal{N}_{2\ep}\)
	and \(\ucnormf{\mx - \mxu}, \ucnormf{\my - \myu} \le e_F \le \sqrtn\) with \(\ep \le 1/4\sqrttwo\),
	then \[\ucnormf{\mz^\T \xkh{\mx - \my \mq}} \le 4\ep\sqrtn \cdot \ucdf{\mx, \my} + 11 \sqrtn e_F\]
	where \(\mq = \op(\my^\T \mx)\). 
\end{lemma}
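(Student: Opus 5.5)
The plan is to reduce to the previous (exact) lemma for orthogonal points and pay for the inexactness with triangle inequalities. Set $\mq = \op(\my^\T \mx)$ and $\mqu = \op(\myu^\T \mxu)$. Then split
\begin{align*}
\mz^\T(\mx - \my\mq) ={}& \mz^\T(\mx - \mxu) + \mz^\T(\mxu - \myu\mqu) \\
&+ \mz^\T(\myu - \my)\mqu + \mz^\T \my(\mqu - \mq).
\end{align*}

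\textbf{Terms with a single inexactness error.} Since $\ucnorms{\mz} = \sqrtn$, the first and third terms are bounded by $\sqrtn\, e_F$ each.

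\textbf{The exact term.} The second term involves only the orthogonal points $\mxu, \myu \in \odn \cap \mathcal{N}_{2\ep}$, with $2\ep \le 1/(2\sqrttwo) \le 1/\sqrttwo$. The preceding lemma (applied with $2\ep$ in place of $\ep$) therefore gives a bound of $4\ep\sqrtn \cdot \ucdf{\mxu, \myu}$. Next, $\ucdf{\mxu,\myu} \le \ucdf{\mx,\my} + 2e_F$, because for the optimal $\mq'$ we have $\ucnormf{\mxu - \myu\mq'} \le \ucnormf{\mx - \my\mq'} + 2e_F$. This yields $4\ep\sqrtn\, \ucdf{\mx,\my} + 8\ep\sqrtn\, e_F$, and $8\ep \le \sqrttwo < 2$.

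\textbf{The rotation-mismatch term (main obstacle).} The fourth term is bounded by $\ucnorms{\my}\, \ucnormf{\mqu - \mq}$. To estimate it, apply Lemma~\ref{le:op:stable} to $\myu^\T\mxu$ and $\my^\T\mx$. Lemma~\ref{le:sigma} with $2\ep$ gives $\sigma_{\min}(\myu^\T\mxu) \ge (1-2\ep^2)n \ge \tfrac{15}{16}n$.

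For $\my^\T \mx$, invertibility must come from perturbation. We have
\[
\ucnorms{\my^\T\mx - \myu^\T\mxu} \le \ucnorms{\my - \myu}\,\ucnorms{\mx} + \ucnorms{\myu}\,\ucnorms{\mx - \mxu} \le e_F(\ucnorms{\mx} + \sqrtn),
\]
and $\ucnorms{\mx} \le \sqrtn + e_F \le 2\sqrtn$, so the difference is at most $3\sqrtn\, e_F$. If $e_F$ is small compared with $\sqrtn$, this keeps $\sigma_{\min}(\my^\T\mx) \gtrsim n$. To cover the regime $e_F \le \sqrtn$, I would instead use that $\my \in \mathcal{N}_{2\ep}$ directly. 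Writing $\my = \mz\mr + \bm E$ with $\ucnormf{\bm E} \le 2\ep\sqrtn$ and similarly for $\mx$, we get $\sigma_{\min}(\my^\T\mx) \ge n - O(\ep n) - O(\ep^2 n) \ge n/2$ for $\ep \le 1/(4\sqrttwo)$.

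Lemma~\ref{le:op:stable} then gives $\ucnormf{\mqu-\mq} \le 2\cdot 3\sqrtn\, e_F/(\tfrac{15}{16}n + \tfrac12 n) \lesssim e_F/\sqrtn$. With $\ucnorms{\my} \le 2\sqrtn$, the fourth term is at most about $8.4\, e_F$, and I also need the bound in the form $\le c\sqrtn\, e_F$. Here I would check whether one has $n \ge 1$, or whether the intended norm on $\mz^\T\my$ gives $\ucnorms{\mz^\T \my} \le n\cdot O(1)$. If the latter holds, $\ucnormf{\mz^\T\my(\mqu-\mq)} \le \ucnorms{\mz^\T\my}\,\ucnormf{\mqu-\mq} \lesssim n \cdot e_F/\sqrtn = O(\sqrtn\, e_F)$, since $\ucnorms{\mz^\T\my} \le \sqrtn\,\ucnorms{\my} \le 2n$. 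This yields a constant $\approx 2\cdot 6/(23/16) \approx 8.35$.

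\textbf{Summing.} Adding the four pieces gives
\[
\ucnormf{\mz^\T(\mx - \my\mq)} \le 4\ep\sqrtn \cdot \ucdf{\mx,\my} + (1 + 1 + 2 + 8.4)\sqrtn\, e_F,
\]
that is, a constant near $12$. If the constants exceed $11$, I would tighten the singular value lower bound for $\my^\T\mx$ (for instance to $\tfrac{15}{16}n - 3\sqrtn\, e_F$ in the small-$e_F$ regime), or merge the first and third terms more carefully, to reach $11$. The principal difficulty is precisely this lower bound on $\sigma_{\min}(\my^\T\mx)$ for non-orthogonal $\mx,\my$, which is what makes the stability estimate of $\op$ usable.
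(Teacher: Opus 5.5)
\textbf{Overall.} Your four-term split matches the paper's up to one swap. The paper writes
\[\mx-\my\mq=(\mxu-\myu\mqu)+(\mx-\mxu)+(\myu-\my)\mq+\myu(\mqu-\mq),\]
whereas you attach $\mqu$ to the third term and $\my$ to the fourth. Your treatment of the exact term (the preceding lemma at scale $2\ep$, then $\ucdf{\mxu,\myu}\le\ucdf{\mx,\my}+2e_F$) and of the two single-error terms is the same as the paper's.

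\textbf{Where the gap is.} It lies in the rotation-mismatch term, which is where the constant $11$ is decided. You stop at about $12.4$ with only a promise to tighten, and two of the numbers feeding that estimate are wrong.

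First, Lemma~\ref{le:sigma} gives $(1-2\ep^2)n$ only if $\ucdf{\mxu,\myu}\le 2\ep\sqrtn$. But $\mxu,\myu\in\mathcal N_{2\ep}$ only gives $\ucdf{\mxu,\myu}\le 4\ep\sqrtn$, hence $\sigma_{\min}(\myu^\T\mxu)\ge(1-8\ep^2)n\ge\frac34 n$. For $d=1$, flipping the signs of $\mz$ on two disjoint sets of size $k\approx\ep^2 n$ to form $\mxu,\myu$ already gives $\myu^\T\mxu=n-4k\approx(1-4\ep^2)n$.

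Second, $\sigma_{\min}(\my^\T\mx)\ge n/2$ is false. Take $d=1$ and $\mx=\my=(1-2\ep)\mz$. Every hypothesis holds ($\mxu=\myu=\mz$, $e_F=2\ep\sqrtn$), yet $\my^\T\mx=(1-2\ep)^2n\approx 0.42\,n$ at $\ep=1/(4\sqrttwo)$. What your expansion $\my=\mz\mr+\bm E$ actually yields, via $\ucnorms{\mz^\T\bm E}\le 2\ep n$, is $\sigma_{\min}(\my^\T\mx)\ge(1-4\ep-4\ep^2)n\ge 0.16\,n$. That does give invertibility of $\my^\T\mx$, which Lemma~\ref{le:op:stable} requires and the paper's proof never checks. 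However, with these corrected values and your bound $\ucnorms{\mz^\T\my}\le 2n$, the fourth term is about $2n\cdot 6\sqrtn e_F/(0.91n)\approx 13\sqrtn e_F$. Your route as written therefore gives roughly $16.5\sqrtn e_F$.

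\textbf{How to close it.} Use the paper's arrangement, so the last term is at most $\ucnorms{\mz^\T\myu}\,\ucnormf{\mqu-\mq}\le n\,\ucnormf{\mqu-\mq}$. The paper then drops $\sigma_{\min}(\my^\T\mx)$ from the denominator and gets $n\ucnormf{\mqu-\mq}\le\frac83\cdot 3\sqrtn e_F$, rounded up to $9\sqrtn e_F$. Consequently its own proof ends with $12\sqrtn e_F$, not $11$, and $12$ is the constant used in all later applications.

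If you instead keep your corrected bound $\sigma_{\min}(\my^\T\mx)\ge 0.16\,n$ in Lemma~\ref{le:op:stable}, you get $n\ucnormf{\mqu-\mq}\le\frac{6}{0.91}\sqrtn e_F<6.6\sqrtn e_F$. The total is then below $(2+\sqrttwo+6.6)\sqrtn e_F<11\sqrtn e_F$.

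Your own split can also be saved. Bound $\ucnorms{\mx},\ucnorms{\my}\le(1+2\ep)\sqrtn$ using membership in $\mathcal N_{2\ep}$, rather than $\sqrtn+e_F\le 2\sqrtn$. This brings the fourth term down to about $7\sqrtn e_F$ and the total to about $10.4\sqrtn e_F$.

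Either way, keeping the second singular value in the denominator is what secures the stated $11$, but only after the two numerical claims above are corrected.
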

\begin{proof}
	Define \(\mqu = \op(\myu^\T \mxu)\). The triangle inequality and the Cauchy-Schwartz inequality yield 
	\begin{align*}
		\ucnormf{\mz^\T \xkh{\mx - \my \mq}} 
		&\le \ucnormf{\mz^\T (\mxu - \myu \mqu)} + \ucnormf{\mz^\T (\mx - \mxu)} + \ucnormf{\mz^\T (\myu - \my)\mq} + \ucnormf{\mz^\T \myu (\mqu - \mq)}\\
		&\le 4\ep\sqrtn \cdot \ucdf{\mxu, \myu} + 2 \sqrtn e_F + n \ucnormf{\mqu - \mq}\\
		&\le 4\ep\sqrtn \cdot \ucdf{\mx, \my} + 4\ep\sqrtn \xkh{\ucdf{\mx, \mxu} + \ucdf{\my, \myu}} + 2 \sqrtn e_F + n \ucnormf{\mqu - \mq}
	\end{align*}
	where the second inequality comes from \cite[Proof of Lemma 4.5]{lingshuyang2022improved}. Then one can see that 
	\[\ucdf{\mx, \mxu} + \ucdf{\my, \myu} \le \ucnormf{\mx - \mxu} + \ucnormf{\my - \myu} \le 2 e_F.\]
	To bound the remaining item, the triangle inequality and Cauchy-Schwartz give that 
	\begin{align*}
		n \ucnormf{\mqu - \mq}
		& = n \ucnormf{\op(\my^\T \mx) - \op(\myu^\T \mxu)}
		\stackrel{\mbox{(i)}}{\le} \frac{2n}{\sigma_{\min} (\myu^\T \mxu)} \ucnormf{\my^\T \mx - \myu^\T \mxu}\\
		&\hspace{-0.25em}\stackrel{\mbox{(ii)}}{\le} \hspace{-0.25em} 3 \ucnormf{\my^\T \mx - \myu^\T \mxu}\\
		&\le 3 \xkh{\ucnorms{\myu} + \ucnormf{\myu-\my}} \ucnormf{\mx-\mxu} + 3\ucnorms{\mxu} \ucnormf{\my-\myu}\\
		&\le 3 (\sqrtn + e_F) \cdot e_F + 3 \sqrtn e_F \le 9 \sqrtn e_F. 
	\end{align*}
	Here, (i) holds since Lemma \ref{le:op:stable}, and (ii) utilizes \(\sigma_{\min} (\myu^\T \mxu) \ge (1 - 8 \ep^2) n \ge \frac 34 n\) due to Lemma \ref{le:sigma}. Therefore, it holds 
	\[
	\ucnormf{\mz^\T \xkh{\mx-\my \mq}} \le 4\ep\sqrtn \cdot \ucdf{\mx, \my} + 12 \sqrtn e_F. 
	\]
\end{proof}

\section{Appendix B: Proof of Lemma \ref{le:error}}\label{pf:error}
As stated in Algorithm \ref{alg:2} and Section \ref{sec:results}, one has 
\begin{align*}
	\mxit = \os(\mfit), \quad \optit (\mgit) = \frac 12 \xkh{\mgit - \mxit (\mgit)^\T \mxit}, \quad \mxitu = \op(\mfit). 
\end{align*}
We abuse the notation and denote 
\[\mxt = \osn(\mft), \quad \optitu (\mgit) = \frac 12 \xkh{\mgit - \mxitu (\mgit)^\T \mxitu}, \quad \mxtu = \opn(\mft).\]

We first look at the result \eqref{eq:conclusion:sigma}. For each \(i \in [n]\), it follows from the triangle inequality that 
\begin{align*}
	\ucnormf{\mfit - \mqt} &= \ucnormf{\mxit - \mu \optit (\mgit) - \mqt} \\
	&\le \ucnormf{\mxit - \mqt} + \mu \ucnormf{\optitu (\mgit)} + \mu \ucnormf{\optitu (\mgit) - \optit (\mgit)} \\
	&\le 3 \ep + \mu \ucnormf{\mgit} + \mu \ucnormf{\optitu (\mgit) - \optit (\mgit)}, 
\end{align*}
where the last line comes from \(\ucnormf{\optitu (\cdot)} \le \ucnormf{\cdot}\) and the condition \eqref{condition:contract}. 
For the term \(\mu \ucnormf{\mgit}\), recall that \(\mgit = \sum_{j\ne i}^{n} (\mxit - \ma_{ij}\mxit) = n(\mxit - \mqt) - \mz^\T (\mxt - \mz \mqt) - \sigma \mw_i^\T \mxt\) since \(\mw_{kk} = \mzero\). As a result, one sees 
\begin{align}
	\max_{1\le i \le n} \mu \ucnormf{\mgit} \le& \ \max_{1\le i \le n}\ucnormf{\mxit - \mqt} + \frac{1}{n} \ucnormf{\mz^\T (\mxt - \mz \mqt)} + \frac{\sigma}{n} \max_{1\le i \le n}\ucnormf{\mw_i^\T \mxt} \notag\\
	\le& \ \max_{1\le i \le n}\ucnormf{\mxit - \mqt} + \frac 1n \xkh{4\ep \sqrtn \cdot \df{\mxt, \mz} + 12 \sqrtn \etf} + 2c_0 \notag\\ 
	\le& \ 3\ep + \frac 1n \xkh{4\ep \sqrtn \cdot \ep + 12 \sqrtn \etf} + 2c_0 
	\le \frac 18, \label{eq:1n git}
\end{align}
because of \(\mu = 1/n\), Lemma \ref{le:sum:normf}, the condition \eqref{condition:contract}, and
\[\ep < \frac{1}{32},\quad \ucnormf{\mxt - \mxtu} = \etf < \ebar_F < \frac{1}{256},\quad c_0 < \frac{1}{128}. \]
It remains to derive the last item \(\mu \ucnormf{\optitu (\mgit) - \optit (\mgit)}\). 
The triangle inequality and Cauchy-Schwarz give that  
\begin{align}
	\mu \|\optitu (&\mgit) - \optit (\mgit)\|_F = \frac{1}{2n} \ucnormf{\mxitu (\mgit)^\T \mxitu - \mxit (\mgit)^\T \mxit} \notag\\
	&\le \frac{1}{2n} \ucnormf{\mgit} \ucnorms{\mxitu - \mxit} \xkh{2\ucnorms{\mxitu} + \ucnorms{\mxitu - \mxit}} \le \frac{1}{4} \ucnorms{\mxitu - \mxit} \label{eq:1n delta ps pt}
\end{align}
for each \(i \in [n]\) since \(\mxitu (\mxitu)^\T = \midd\), \(\ucnorms{\mxitu - \mxit} \le \etf \le 1\), and \eqref{eq:1n git}. 
This allows us to derive 
\begin{align}\label{eq:sigma fit}
	\max_{1 \le i \le n} \ucnorms{\mfit - \mqt} \le \max_{1 \le i \le n} \ucnormf{\mfit - \mqt} \le 3 \ep + \max_{1 \le i \le n} \mu \ucnormf{\mgit} + \frac 14 \etf  \le \frac 14, 
\end{align}
and \(\max_{1\le i\le n} \ucnorms{\midd - (\mf_{i}^{t})^\T \mf_{i}^{t}} < 1\) because of \(\mu = 1/n\) and the Weyl's inequality. 

Next, we move on to the result \eqref{eq:conclusion:error}. 
By definition of $\df{\cdot,\cdot}$, it is easy to see that
\begin{equation*}
	\begin{aligned}
		\df{\mxtt,\mz} = \min_{\mq\in \odgroup} \ucnormf{\mxtt-\mz \mq} \le \ucnormf{\mxtt-\mz \mqt}
	\end{aligned}
\end{equation*}
where \(\mqt = \op (\mz^\T \mxt)\). Thus, the iterations \(\mxtt = \osn(\mft) = \osn(\mxt - \mu \optt(\mgt))\) and \(\mxttu = \opn(\mft)\) reveals that  
\begin{equation}\label{eq:df xtt z}
		\df{\mxtt,\mz} \le \ucnormf{\mxtt - \mxttu} + \ucnormf{\mxttu - \mz \mqt}
		\le \ettf + 2 \ucnormf{\mft - \mz \mqt}
\end{equation}
where the first inequality uses the triangle inequality, and the second inequality utilizes the facts that 
\[\mxttu - \mz \mqt = \xkh{\op(\mf_1^t) - \mqt; \cdots; \op(\mf_n^t) - \mqt}\]
and 
\[\ucnormf{\op(\mfit) - \mqt} = \ucnormf{\op(\mfit) - \op(\mqt)} \le \frac{2}{\sigma_{\min}(\mqt)} \ucnormf{\mfit - \mqt} = 2 \ucnormf{\mfit - \mqt}\]
by Lemma \ref{le:op:stable}. Additionally, one has 
\begin{equation*}
	\begin{aligned}
		\ucnormf{\mft - \mz \mqt} 
		\stackrel{\mbox{(i)}}{\le}& \ucnormf{(\oi - \opttu)(\mxt - \mxtu)} + \ucnormf{(\oi - \opttu)(\mxtu - \mz \mqt)}\\
		& + \ucnormf{\opttu(\mxt - \mu \mz \mqt) - \opttu(\mgt)} + \mu \ucnormf{\opttu(\mgt) - \optt(\mgt)}\\
		\stackrel{\mbox{(ii)}}{\le}& \etf + \underbrace{\ucnormf{(\oi - \opttu)(\mxtu - \mz \mqt)}}_{:=I_1} + \underbrace{\ucnormf{\mxt - \mz \mqt - \mu \mgt}}_{:=I_2}\\
		& + \underbrace{\mu \ucnormf{\opttu(\mgt) - \optt(\mgt)}}_{:=I_3}.
	\end{aligned}
\end{equation*}
where we abuse the notation and denote \(\optt(\mgt) := ({\mathcal P}_{T_{\mx_1^t}}(\mgt_1); \cdots; {\mathcal P}_{T_{\mx_n^t}}(\mgt_n) )\), \(\opttu(\mgt) := ({\mathcal P}_{T_{\mxtu_1}}(\mgt_1); \cdots; {\mathcal P}_{T_{\mxtu_n}}(\mgt_n) )\). 
Here, (i) arises from the triangle inequality. And (ii) is a consequence of 
\(\ucnormf{(\oi - \opttu)(\mxt - \mxtu)} \le \ucnormf{\mxt - \mxtu}\le \etf\) and \(\ucnormf{\opttu (\cdot)}\le \ucnormf{\cdot}\). 
In the sequel, we control the above three terms \(I_1\), \(I_2\) and \(I_3\) separately. 

\begin{itemize}
\item For the first term \(I_1\), we have 
\begin{align*}
	\ucnormf{(\oi - \opttu)(\mxtu - \mz \mqt)} &=  \sqrt{\sum_{i=1}^{n} \ucnormf{(\oi - \optitu)(\mxtu_i - \mqt)}^2} \\
	&\le \sqrt{\sum_{i=1}^{n} \ucnormf{\mxtu_i - \mqt}^2 \cdot \frac 14 \cdot \frac{1}{4^2}} 
	=  \frac{1}{8} \ucnormf{\mxtu - \mz \mqt}, 
\end{align*}
where the inequality arises from Cauchy-Schwartz and 
\begin{eqnarray}\label{eq:i - pt}
	(\oi - \optitu) (\mxitu - \mq) &=  & \frac{\mxitu \xkh{(\mxitu)^\T\axkh{\mxitu - \mq} + \axkh{\mxitu - \mq}^\T \mxitu}}{2} \\
	&= &  \frac{\mxitu (\mxitu - \mq)^\T (\mxitu - \mq)}{2}  \notag
\end{eqnarray}
for any \(\mq \in \odgroup\) and \[\ucnormf{\mxitu - \mqt} \le \ucnormf{\mxitu - \mxit} + \ucnormf{\mxit - \mqt} \le \etf + 3 \ep \le \frac 14\]
due to \(\etf \le \frac{1}{8}\) and \(\ucnormf{\mxit - \mqt} \le 3 \ep \le \frac{1}{8}\). 
As a result, 
\begin{align*}
	I_1 \le \frac{1}{8} \ucnormf{\mxtu - \mz\mqt} 
	\le \frac{1}{8} \xkh{\ucnormf{\mxtu - \mxt} + \ucnormf{\mxt - \mz\mqt}} 
	\le \frac{1}{8} \etf + \frac{1}{8}\df{\mxt, \mz}. 
\end{align*}

\item For the second term \(I_2\), recall that 
\(\mgt = n \mxt - \mz \mz^\T \mxt - \sigma \mw \mxt\) and \(\mu = 1/n\) to obtain 
\begin{align*}
	\mxt - \mz\mqt - \mu \mgt = \frac 1n \mz (\mz^\T \mxt - \mz^\T \mz \mqt). 
\end{align*}
Then it follows from the triangle inequality and the Cauchy-Schwarz inequality that 
\begin{align*}
	I_2 =&\ \ucnormf{\mxt - \mz \mqt - \mu \mgt} \\
	\le&\ \frac{\sqrtn}{n} \ucnormf{\mz^\T (\mxt - \mz \mqt)} + \frac{\sigma \sqrtn}{n} \max_{1\le k \le n}\ucnormf{\mw_k^\T \mxt} \\
	\stackrel{\mbox{(i)}}{\le}&\ \frac{\sqrtn}{n} \xkh{4\ep \sqrtn \cdot \df{\mxt, \mz} + 12 \sqrtn \etf} + \frac{\sigma \sqrtn}{n} \max_{1\le k \le n}\ucnormf{\mw_k^\T \mxt} \\
	\stackrel{\mbox{(ii)}}{\le}&\ \frac{1}{8} \df{\mxt, \mz} + 12 \etf + 2 c_0 \sqrtn.
\end{align*}
Here, (i) utilizes Lemma \ref{le:sum:normf} with the conditions \(\df{\mxt, \mz} \le \ep \sqrtn\), \(\ucnormf{\mxtu - \mxt} \le \etf \le \ep\), and \(\ucdf{\mxtu, \mz}\le \ucdf{\mxt, \mz} + \ucnormf{\mxtu - \mxt} \le 2 \ep \sqrtn\). Besides, (ii) holds due to \(\ep \le \frac{1}{32}\) and condition \eqref{condition:contract} that 
\[\max_{1\le k \le n}\ucnormf{\mw_k^\T \mxt} \le 2\sqrtnd \axkh{\sqrtd + 10 \sqrtln}. \]

\item It remains to control the third term $I_3$. Following the discussion in \eqref{eq:1n git} and \eqref{eq:1n delta ps pt}, one sees 
\[\max_{1\le i \le n} \mu \|\optitu (\mgit) - \optit (\mgit)\|_F \le \frac{1}{4} \ucnorms{\mxitu - \mxit}. \]
This allows us to derive 
\begin{align}
	I_3 &= \mu \ucnormf{\opttu (\mgt) - \optt (\mgt)} = \sqrt{\sum_{i=1}^{n} \mu^2 \ucnormf{\optitu (\mgit) - \optit (\mgit)}^2} \notag\\
	&\le \frac 14 \ucnormf{\mxtu - \mxt} \le \frac 14 \etf. \label{eq:i3}
\end{align}

\end{itemize}

Finally, putting the above estimates on \(I_1, I_2\) and \(I_3\) together, we conclude that 
\begin{equation}\label{eq:delta ft zqt}
	\ucnormf{\mft - \mz \mqt} \le \etf + I_1 + I_2 + I_3 \le 14\; \df{\mxt,\mz} + \frac{27}{2} \etf + 2 c_0 \sqrtn. 
\end{equation}
Substituting this upper bound into \eqref{eq:df xtt z} yields  
\begin{align*}
	\df{\mxtt,\mz} \le \ettf + 2 \ucnormf{\mft - \mz \mqt} \le \frac 12 \df{\mxt,\mz} + 28 \ebar_F + 4 c_0 \sqrtn, 
\end{align*}
which completes the proof.

\section{Appendix C: Proof of Lemma \ref{le:induct:xti & xtl}}\label{pf:induct:xti & xtl}
\subsection{Proof of result (\ref{eq:xti})}

Regarding the triangle inequality, one can use the decomposition
\begin{align*}
	\ucnormf{\mxitt - \mqtt} &\le \ucnormf{\mxitt - \mxittu} + \ucnormf{\mxittu - \mqt} + \ucnormf{\mqt - \mqtt} \\
	&= \ucnormf{\mxitt - \mxittu} + \ucnormf{\op(\mfit) - \op(\mqt)} + \ucnormf{\mqt - \mqtt} \\
	&\le \ettf + \underbrace{\frac 87 \ucnormf{\mfit - \mqt}}_{:=I_4} + \underbrace{\ucnormf{\mqt - \mqtt}}_{:=I_5}. 
\end{align*}
Here, Lemma \ref{le:op:stable} gives 
\[\ucnormf{\op(\mfit) - \op(\mqt)} \le \frac{2}{\sigma_{\min}(\mfit) + \sigma_{\min}(\mqt)} \ucnormf{\mfit - \mqt} \le \frac 87 \ucnormf{\mfit - \mqt}, \] 
since \(\sigma_{\min}(\mqt) = 1\) and \(\sigma_{\min}(\mfit) \ge 3/4\) by \eqref{eq:sigma fit} and the Weyl's inequality. 
In the sequel, we shall bound the two terms \(I_4\) and \(I_5\) separately.

\begin{itemize}
	\item With regard to \(I_4\), one can obtain 
\begin{align*}
	\ucnormf{\mfit - \mqt} 
	\le& \ \ucnormf{(\oi - \optitu) (\mxit - \mxitu)} + \ucnormf{(\oi - \optitu) (\mxitu - \mqt)} \\
	&+ \ucnormf{\optitu (\mxit - \mqt) - \mu \optitu(\mgit)} + \mu \ucnormf{\optitu(\mgit) - \optit(\mgit)} \\
	\le& \ \etf + \ucnormf{(\oi - \optitu) (\mxitu - \mqt)} + \ucnormf{\mxit - \mqt - \mu \mgit} \\
	&+ \mu \ucnormf{\optitu(\mgit) - \optit(\mgit)} .  
\end{align*}
Use similar estimation in Section \ref{pf:error} to derive 
\begin{align*}
	\ucnormf{(\oi - \optitu) (\mxitu - \mqt)} &\le \frac 18 \ucnormf{\mxitu - \mqt} \le \frac 18 \ucnormf{\mxit - \mqt} + \frac 18 \etf ,  \\
	\ucnormf{\mxit - \mqt - \mu \mgit} &\le  \frac{1}{n} \ucnormf{\mz^\T (\mxt - \mz \mqt)} + \frac{\sigma}{n} \max_{1\le k \le n}\ucnormf{\mw_k^\T \mxt} \\
	&\le \frac{1}{n} \xkh{4\ep \sqrtn \cdot \df{\mxt, \mz} + 12 \sqrtn \etf} + \frac{\sigma}{n} \max_{1\le k \le n}\ucnormf{\mw_k^\T \mxt} \\
	&\le \frac{1}{8 \sqrtn} \df{\mxt, \mz} + 12 \etf + 2 c_0 ,  
\end{align*}
and  
\begin{align*}
	\mu \ucnormf{\optitu(\mgit) - \optit(\mgit)} \le \frac{1}{4} \ucnorms{\mxitu - \mxit} \le \frac 14 \etf 
\end{align*}
as long as \(\ep < 1/32\). 
Putting together the above bounds, we reach 
\begin{align}
	I_4 &\le \frac{8}{7} \xkh{\frac 18 \ucnormf{\mxit - \mqt} + \frac{1}{8 \sqrtn} \df{\mxt, \mz} + 14 \etf + 2 c_0} \notag \\
	&\le \frac 17 \ucnormf{\mxit - \mqt} + \frac{1}{7 \sqrtn} \df{\mxt, \mz} + 16 \etf + 3 c_0 . \label{eq:i4}  
\end{align}

\item Next, we need to control \(I_5\). 
We claim that 
\begin{equation}\label{claim:sigma}
	\sigma_{\min} (\mz^\T \mxt) \ge \frac{3}{4} n, \quad\sigma_{\min} (\mz^\T \mxtt) \ge \frac{3}{4} n. 
\end{equation}
As a result, one can invoke Lemma \ref{le:op:stable} to get 
\begin{align*}
	\|\mqt - \mqtt\|_F =&\; \normf{\op \xkh{\mz^\T \mxt} - \op \xkh{\mz^\T \mxtt}} \\ 
	\le&\ \frac{2}{\sigma_{\min} (\mz^\T \mxt) + \sigma_{\min} (\mz^\T \mxtt) } \normf{\mz^\T (\mxt - \mxtt)} \\ 
	\stackrel{\mbox{(i)}}{\le}&\ \frac{4}{3 \sqrtn} \xkh{\ucnormf{\mxttu - \mxtu} + \ucnormf{\mxtt - \mxttu} + \ucnormf{\mxt - \mxtu}} \\ 
	\stackrel{\mbox{(ii)}}{\le}&\ \frac{4}{3 \sqrtn} \xkh{\frac 87 \normf{\mu \optt (\mgt)} + \ucnormf{\mxtt - \mxttu} + \frac{15}{7}\ucnormf{\mxt - \mxtu}} \\
	\stackrel{\mbox{(iii)}}{\le}&\ \frac{32}{21 \sqrtn} \xkh{\normf{\mxt - \mz\mqt} + \normf{\mxt - \mz\mqt - \mu \optt (\mgt)}} + 4\ebarf .  
\end{align*}
Here, the inequality (i) comes from \(\norms{\mz} = \sqrtn\) and the triangle inequality. 
The second inequality (ii) utilizes the facts that 
\begin{align*}
	\ucnormf{\mxttu - \mxtu} = \ucnormf{\osgn(\mft) - \osgn(\mxtu)} &\le \frac{2}{\min_{1 \le i \le n} \sigma_{\min} (\mfit) + 1 } \ucnormf{\mft - \mxtu} \\
	&\le \frac 87 \ucnormf{\mxt - \mxtu - \mu \optt (\mgt)}
\end{align*}
due to Lemma \ref{le:op:stable} and \(\sigma_{\min}(\mfit) \ge 3/4\) by \eqref{eq:sigma fit}. 
And (iii) arises from the triangle inequality and \(\ucnormf{\mxtt - \mxttu}, \ucnormf{\mxt - \mxtu} \le \ebarf\). 
In addition, recognize that \(\ucnormf{\mxt - \mz \mqt} = \df{\mxt, \mz}\) and 
\begin{align*}
	\normf{\mxt - \mz\mqt - \mu \optt (\mgt)} &= \normf{\mft - \mz\mqt} \le \frac 14 \df{\mxt,\mz} + 14 \etf + 2 c_0 \sqrtn   
\end{align*}
from \eqref{eq:delta ft zqt} to obtain 
\begin{align}
	I_5 &= \|\mqt - \mqtt\|_F \notag \\
	&\le \frac{32}{21 \sqrtn} \xkh{\df{\mxt, \mz} + \frac 14 \df{\mxt,\mz} + 14 \etf + 2 c_0 \sqrtn} + 4 \ebarf \notag \\ 
	&\le \frac{40}{21 \sqrtn} \df{\mxt, \mz} + 26 \etf + 4 c_0 .  \label{eq:i5} 
\end{align}
Now we are going to prove \eqref{claim:sigma}. It is easy to verify that \(\mxt, \mxtu \in \mathcal{N}_{2\ep}\), which implies \(\sigma_{\min} (\mz^\T \mxtu) \ge (1- 2 \ep^2)n\) by Lemma \ref{le:sigma}. 
Applying the Weyl's inequality and Cauchy-Schwartz yields 
\[\abs{\sigma_{\min} (\mz^\T \mxt) - \sigma_{\min} (\mz^\T \mxtu)} \le \ucnorms{\mz^\T (\mxt - \mxtu)} \le \sqrtn \etf.  \]
As a consequence,  
\[\sigma_{\min} (\mz^\T \mxt) \ge \xkh{1 - 2 \ep^2}n - \sqrtn \etf \ge \frac{3}{4} n  \]
as long as \(\ep \le 1/4\) and \(\etf \le 1/8 \). Similarly, one can show that \(\sigma_{\min} (\mz^\T \mxtt) \ge 3n/4\). 
\end{itemize}

To finish up, combining the bounds obtained in \eqref{eq:i4} and \eqref{eq:i5}, we arrive at 
\begin{align*}
	\maxm{1 \le i \le n}\ucnormf{\mxitt - \mqtt} &\le \frac 17 \maxm{1 \le i \le n}\ucnormf{\mxit - \mqtt} + \frac{43}{21\sqrtn} \df{\mxt, \mz} + \ettf + 42 \ebarf + 7 c_0 \\
	&\le \frac 17 \cdot 3 \ep + \frac{43}{21\sqrtn} \cdot \ep \sqrtn + 43 \etf + 7 c_0 \le 3 \ep 
\end{align*}
due to the incudtion hypotheses \eqref{induct:all}, and \(\etf \le \frac{\ep}{120}\), \(c_0 \le \frac{\ep}{56}\).  

\subsection{Proof of result (\ref{eq:xtl})}
As mentioned previously, one has \(\mxitt = \os(\mfit), \mxittl = \op(\mfitl)\) and 
\begin{align*}
	\mfit = \mxit - \mu \optit (\mgit) ,\quad \mfitl = \mxitl - \mu \optitl (\mgitl) 
\end{align*}
for \(i \in [n]\) from Algorithm \ref{alg:2} and Algorithm \ref{alg:leave one out}.  
Denote \(\sigma_d = \min_{1 \le i \le n} \sigma_{\min}(\mfit)\) and \(\sigma_{d}^{(l)} = \min_{1 \le i \le n}\sigma_{\min}(\mfitl)\). 
As a result, for \(l \in [n]\), by invoking the triangle inequality and Lemma \ref{le:op:stable}, we reach 
\begin{align*}
	\ucdf{\mxitt, \mxittl} &\le \ucnormf{\osn(\mft) - \opn(\mftl) \mqtl} \\
	&\le \ucnormf{\osn(\mft) - \opn(\mft)} + \ucnormf{\opn(\mft) - \opn(\mftl \mqtl)} \\
	&\le \ettf + \frac{2}{\sigma_d + \sigma_{d}^{(l)}} \ucnormf{\mft - \mftl \mqtl} ,  
\end{align*}
where \(\mqtl = \op ((\mxtl)^\T \mxt)\), and we utilize \(\opn(\mftl) \mqtl = \opn(\mftl \mqtl)\) and 
\[\ucnormf{\osn(\mft) - \opn(\mft)} = \ucnormf{\mxtt - \mxttu} \le \ettf. \] 
The remaining proof consists of two steps: (1) demonstrating that
\begin{align}\label{claim:delta f fl}
	\ucnormf{\mft - \mftl \mqtl} &\le \frac 12 \ucdf{\mxt, \mxtl} + 14 \etf + 4 c_0 ,  
\end{align}
and (2) showing that
\begin{align}\label{claim:sigma:frac}
	\frac{2}{\sigma_d + \sigma_{d}^{(l)}} &\le \frac 32 .  
\end{align}

Regarding \eqref{claim:delta f fl}, one can further decompose 
\begin{align*}
	\|\mft - &\mftl \mqtl\| \le \ucnormf{(\oi - \opttu) (\mxt - \mxtu)} + \ucnormf{(\oi - \opttu) (\mxtu - \mxtl \mqtl)} \\
	&+ \ucnormf{\opttu (\mxt - \mxtl \mqtl) - \mu \opttu(\mgt - \mgtl \mqtl)} \\
	&+ \mu \ucnormf{\opttu(\mgt) - \optt(\mgt)} + \mu \ucnormf{\opttu(\mgtl \mqtl) - \opttl(\mgtl)\mqtl} \\
	\le& \etf + \underbrace{\ucnormf{(\oi - \opttu) (\mxtu - \mxtl \mqtl)}}_{:=I_6} \\
	&+ \underbrace{\ucnormf{ \mxt - \mxtl \mqtl - \mu (\mgt - \mgtl \mqtl)}}_{:=I_7} + \underbrace{\mu \ucnormf{\opttu(\mgt) - \optt(\mgt)}}_{:=I_8} \\
	&+ \underbrace{\mu \ucnormf{\opttu(\mgtl \mqtl) - \opttl(\mgtl)\mqtl}}_{:=I_9} .  
\end{align*}
where we abuse the notation and denote \(\opttl(\mgt) := ({\mathcal P}_{T_{\mxtl_1}}(\mgt_1); \cdots; {\mathcal P}_{T_{\mxtl_n}}(\mgt_n) )\). 
In the sequel, we bound these four terms separately. 

\begin{itemize}
\item With regards to \(I_6\), we have
\begin{align*}
	I_6 &= \sqrt{4 \sum_{i=1}^{n} \ucnormf{(\oi - \optitu)(\mxtu_i - \mxitl \mqtl)}^2} 
	\le \sqrt{4 \sum_{i=1}^{n} \ucnormf{\mxtu_i - \mxitl \mqtl}^2 \cdot \frac 14 \cdot \xkh{\frac{3}{32}}^2} \\
	&=  \frac{3}{32} \ucnormf{\mxtu - \mxtl \mqtl} ,  
\end{align*}
since \eqref{eq:i - pt} and 
\begin{align*}
	\ucnormf{\mxitu - \mxitl \mqtl} &\le \ucnormf{\mxitu - \mxit} + \ucnormf{\mxit - \mxitl \mqtl} \\
	&= \ucnormf{\mxitu - \mxit} + \ucdf{\mxit, \mxitl} \le \etf + 2 \ep \le \frac{3}{32}
\end{align*}
for any \(i \in [n]\) due to the indcution hypothesis \eqref{induct:xtl} and \(\ep \le \frac{1}{32}, \etf \le \frac{1}{32}\). 
As a result, 
\begin{align*}
	I_6 \le \frac{3}{32} \ucnormf{\mxtu - \mxtl \mqtl} &\le \frac{3}{32} \xkh{\ucnormf{\mxtu - \mxt} + \ucnormf{\mxt - \mxtl\mqtl}} \\
	&\le \frac{3}{32} \etf + \frac{3}{32} \ucdf{\mxt, \mxtl} .  
\end{align*}

\item Regarding \(I_7\), recall that 
\begin{align*}
	\mgt = n \mxt - \mz \mz^\T \mxt - \sigma \mw \mxt , \mbox{ and }\ 
	\mgtl = n \mxtl - \mz \mz^\T \mxtl - \sigma \mwll \mxtl, 
\end{align*}
which indicates 
\begin{align*}
	& \mxt - \mxtl \mqtl - \mu (\mgt - \mgtl \mqtl) \\
	=\ & \frac 1n \mz \mz^\T (\mxt - \mxtl \mqtl) + \frac{\sigma}{n} \xkh{\mw \mxt - \mwll \mxtl \mqtl} .  
\end{align*}
So one can derive that 
\begin{align*}
	I_7 = \ucnormf{\mxt - \mxtl \mqtl - \mu (\mgt - \mgtl \mqtl)} \le \frac{\sqrtn}{n} \ucnormf{\mz^\T (\mxt - \mxtl \mqtl)} \\
	+ \frac{\sigma}{n} \ucnorms{\mw}\ucnormf{\mxt - \mxtl \mqtl} + \frac{\sigma}{n} \ucnormf{(\mw - \mwll)^\T \mxtl} .  
\end{align*}
It is easy to see that \(\df{\mxt, \mz} \le  \ep \sqrtn\), \(\ucnormf{\mxt - \mxtu} \le \ebarf \le \sqrtn\), \(\mxtl \in \odn\), and \(\df{\mxtl, \mz} \le 2 \ep \sqrtn\). 
Then Lemma \ref{le:sum:normf} gives  
\begin{align*}
	\frac{\sqrtn}{n} \ucnormf{\mz^\T (\mxt - \mxtl \mqtl)} &\le \frac{\sqrtn}{n} \xkh{4\ep\sqrtn \cdot \ucdf{\mxt, \mxtl} + 12 \sqrtn \etf} \\
	&\le \frac 18 \ucdf{\mxt, \mxtl} + 12 \etf, 
\end{align*}
as long as \(\ep \le 1/32\). In addition, we obtain 
\begin{align*}
	\frac{\sigma}{n} \ucnorms{\mw}\ucnormf{\mxt - \mxtl \mqtl} = \frac{\sigma}{n} \ucnorms{\mw} \cdot \ucdf{\mxt, \mxtl} \le \frac{1}{32} \ucdf{\mxt, \mxtl}, 
\end{align*}
and 
\begin{align*}
	\frac{\sigma}{n} \ucnormf{(\mw - \mwll)^\T \mxtl} &\le \frac{\sigma}{n} \xkh{\ucnormf{\mwl \mxltl} +  \ucnormf{\mwl^\T \mxtl}} \\
	&\le \frac{\sigma}{n} \xkh{\sqrtd \cdot \ucnorms{\mwl} +  \sqrtnd (\sqrtd + 10 \sqrtln)} \le 4c_0 \ep , 
\end{align*}
with probability at least \(1 - \exp(-nd/2) - O(n^{-20})\) since 
\begin{align*}
	\ucnorms{\mwl} \le \ucnorms{\mw} \le 3 \sqrtnd, \mbox{ and }\ \ucnormf{\mwl^\T \mxtl} \le \sqrtnd (\sqrtd + 10 \sqrtln), 
\end{align*}
from Lemma \ref{le:matrix:bound} and Lemma \ref{le:inh:bound}. These together yield 
\begin{align}\label{eq:i7}
	I_7 &\le \frac{5}{32} \ucdf{\mxt, \mxtl} + 12 \etf + 4c_0 \ep .  
\end{align}

\item Since \(I_8\) and \(I_3\) have the same form, we can upper bound \(I_8\) by 
\begin{align}\label{eq:i8}
	I_8 = \mu \ucnormf{\opttu(\mgt) - \optt(\mgt)} = I_3 \le \frac{1}{4} \etf 
\end{align}
in view of \eqref{eq:i3}. 

\item The next term we shall control is \[I_9 = \mu \ucnormf{\opttu(\mgtl \mqtl) - \opttl(\mgtl)\mqtl}. \] 
By the definition of \(\opttu\) and \(\opttl\), considering each block, we have
\begin{align*}
	&\ \mu \ucnormf{\optitu(\mgitl \mqtl) - \optitl(\mgitl)\mqtl} \\
	\le&\ \frac{1}{2n} \normf{\mxitu \axkh{\mgitl \mqtl}^\T \mxitu - \mxitl \axkh{\mgitl}^\T \mxitl \mqtl} \\
	\le&\ \frac{1}{2n} \xkh{\normf{\xkh{\mxitu (\mqtl)^\T - \mxitl} (\mgitl)^\T \mxitu} + \normf{\mxitl \axkh{\mgitl}^\T \xkh{\mxit - \mxitl \mqtl}}} \\
	\le&\ \frac{1}{n} \ucnormf{\mgitl} \ucnormf{\mxitu - \mxitl \mqtl} ,  
\end{align*}
for each \(i \in [n]\) due to the triangle inequality and Cauchy-Schwarz. 
It then boils down to bounding \(\frac 1n \ucnormf{\mgitl}\), towards which we decompose
\begin{align*}
	& \frac 1n \ucnormf{\mgit - \mgitl\mqtl} \\
	 \le\, &\; \frac 1n \ucnormf{\mgt - \mgtl \mqtl} \\
	\stackrel{\mbox{(i)}}{\le}&\; \ucnormf{\mxt - \mxtl \mqtl} + \ucnormf{\mxt - \mxtl \mqtl - \mu \mgt + \mu \mgtl \mqtl} \\
	=\, &\; \ucdf{\mxt, \mxtl} + I_7 \\
	\stackrel{\mbox{(ii)}}{\le}&\; \frac{37}{32} \ucdf{\mxt, \mxtl} + 12 \etf + 4c_0 \ep \stackrel{\mbox{(iii)}}{\le} \frac 18 . 
\end{align*}
for each \(i, l \in [n]\). Here, (i) arises due to the triangle inequality. (ii) comes from \eqref{eq:i7} that 
\begin{align*}
	I_7 \le \frac{5}{32} \ucdf{\mxt, \mxtl} + 12 \etf + 4c_0 \ep . 
\end{align*}
And (iii) relies on 
\[\ucdf{\mxt, \mxtl} \le 2 \ep \le \frac{1}{16}, \ \etf \le \frac{1}{400}, \mbox{ and } c_0 \le \frac{1}{8}. \]
It has been shown in \eqref{eq:1n git} that \(\max_{1\le i \le n}\frac 1n \ucnormf{\mgit} \le \frac 18\). As a result, 
\begin{align*}
	\mu \|\optitu(\mgitl \mqtl) &- \optitl(\mgitl)\mqtl\|_F \le \frac{1}{n} \ucnormf{\mgitl} \ucnormf{\mxitu - \mxitl \mqtl} \\
	&\le \xkh{\frac 1n \ucnormf{\mgit} + \frac 1n \ucnormf{\mgit - \mgitl\mqtl}} \ucnormf{\mxitu - \mxitl \mqtl} \\
	&\le \frac{1}{4} \ucnormf{\mxitu - \mxitl \mqtl} 
\end{align*}
for each \(i, l \in [n]\). This leads to an upper bound 
\begin{align*}
	I_9 &\le \sqrt{\frac{1}{4^2} \sum_{i=1}^{n} \ucnormf{\mxitu - \mxitl \mqtl}^2} = \frac{1}{4} \ucnormf{\mxtu - \mxtl \mqtl} \\
	&\le \frac{1}{4} \ucnormf{\mxtu - \mxt} + \frac{1}{4} \ucnormf{\mxt - \mxtl \mqtl} \\
	&\le \frac{1}{4} \etf + \frac{1}{4} \ucdf{\mxt, \mxtl} 
\end{align*}
for each \(l \in [n]\). 

\item Combining all the previous bounds, we deduce that 
\begin{align*}
	\ucnormf{\mft - \mftl \mqtl} \le \sqrtnd \et + I_6 + I_7 + I_8 + I_9 \le \frac 12 \ucdf{\mxt, \mxtl} + 14 \etf + 4 c_0 \ep 
\end{align*}
with probability at least \(1 - \exp(-nd/2) - O(n^{-20})\). 
\end{itemize}

It remains to show \eqref{claim:sigma:frac} that 
\begin{align*}
	\frac{2}{\sigma_d + \sigma_{d}^{(l)}} &\le \frac 32 .  
\end{align*}
By \eqref{induct:xti}, \eqref{eq:1n git}, \eqref{eq:i8} and Weyl's inequality, we have 
\begin{align*}
	\abs{\sigma_{\min} (\mfit) - \sigma_{\min} (\mqt)} &\le \ucnorms{\mfit - \mqt} \le \ucnormf{\mfit - \mqt} \le \ucnormf{\mxit - \mqt} + \frac{1}{n} \ucnormf{\optit(\mgit)} \\
	&\le \ucnormf{\mxit - \mqt} + \frac{1}{n} \ucnormf{\optitu(\mgit)} + \frac{1}{n} \ucnormf{\optit(\mgit) - \optitu(\mgit)} \\
	&\le 3\ep + \frac{1}{n} \ucnormf{\mgit} + I_8 \le \frac 14 , 
\end{align*}
as long as 
\begin{align*}
	\ep \le \frac{1}{32},\  \frac{1}{n} \ucnormf{\mgit} \le \frac{1}{8}, \mbox{ and }\  I_8 \le \frac{1}{4} \etf \le \frac{1}{32}. 
\end{align*}
Additionally, we apply Weyl's inequality once again to deduce that 
\begin{align*}
	\abs{\sigma_{\min} (\mfit) - \sigma_{\min} (\mfitl \mqt)} &\le \ucnormf{\mfit - \mfitl \mqtl} \le \ucnormf{\mft - \mftl \mqtl} \\
	&\le \frac 12 \ucdf{\mxt, \mxtl} + 14 \etf + 4 c_0 \ep \\
	&\le \frac 12 \cdot 2 \ep + 14 \etf + 4 c_0 \ep \le \frac{1}{8}
\end{align*}
due to \eqref{claim:delta f fl}, \eqref{induct:xtl}, and 
\begin{align*}
	\ep \le \frac{1}{32},\ \etf \le \frac{1}{224}, \mbox{ and }\ c_0 \le \frac{1}{4}. 
\end{align*}
As a result, one can ensure that 
\begin{align*}
	\sigma_d = \min_{1 \le i \le n} \sigma_{\min}(\mfit) \ge \frac{3}{4},\quad \sigma_{d}^{(l)} = \sigma_{\min} (\mfitl \mqt) \ge \frac{3}{4} - \frac{1}{8} = \frac{5}{8},
\end{align*}
which implies 
\begin{align*}
	\frac{2}{\sigma_d + \sigma_{d}^{(l)}} &\le \frac 32 .  
\end{align*}

Finally, combine \eqref{claim:delta f fl} and \eqref{claim:sigma:frac} to reach 
\begin{align*}
	\ucdf{\mxtt, \mxttl} &\le \ettf + \frac{2}{\sigma_d + \sigma_{d}^{(l)}} \ucnormf{\mft - \mftl \mqtl} \\
	&\le \ettf + \frac{3}{2} \xkh{\frac 12 \ucdf{\mxt, \mxtl} + 14 \etf + 4 c_0 \ep} \\
	&\le \frac 34 \ucdf{\mxt, \mxtl} + \ettf + 21 \etf + 6 c_0 \ep \\
	&\le 2 \ep 
\end{align*}
for each \(l \in [n]\) as long as 
\begin{align*}
	\ep \le \frac{1}{32},\ c_0 \le \frac{1}{24}, \mbox{ and }\ \etf, \ettf \le \frac{\ep}{88}, 
\end{align*}
which finishes the proof. 

\bibliography{BibforSIAM}

\end{document}